\def\1{\bm{1}}
\DeclareMathAlphabet{\mathsfit}{\encodingdefault}{\sfdefault}{m}{sl}
\SetMathAlphabet{\mathsfit}{bold}{\encodingdefault}{\sfdefault}{bx}{n}
\definecolor{citecolor}{HTML}{0071bc}
\newtheorem{theorem}{Theorem}
\newlength\savewidth
\newcommand{\bbx}{\mathbf{x}}
\title{Fast Diffusion Model}
\renewcommand\footnotemark{}
\author{%
  Zike Wu$^{1,2}$\quad 
  Pan Zhou$^{*1}$ \quad  Kenji Kawaguchi$^3$ \quad Hanwang Zhang$^2$ \thanks{$^*$Corresponding author.}\\
  $^1$ Sea AI Lab, Singapore \\
  $^2$ Nanyang Technological University \\
  $^3$ National University of Singapore \\
  \footnotesize{\texttt{zike001@e.ntu.edu.sg}},\quad \footnotesize{\texttt{zhoupan@sea.com}} \\ \footnotesize{\texttt{kenji@nus.edu.sg}},\quad \footnotesize{\texttt{hanwangzhang@ntu.edu.sg}} \\
}
\DeclareRobustCommand\onedot{\futurelet\@let@token\@onedot}
\def\@onedot{\ifx\@let@token.\else.\null\fi\xspace}
\def\eg{\emph{e.g}\onedot} 
\def\ie{\emph{i.e}\onedot} 
\def\etc{\emph{etc}\onedot} 
\def\wrt{w.r.t\onedot} 
\def\ours{\text{FDM}\xspace}
\begin{document}

\maketitle

\begin{abstract}  
Diffusion models (DMs) have been adopted across diverse fields with its remarkable abilities in capturing intricate data distributions. In this paper, we propose a Fast Diffusion Model (FDM) to significantly speed up DMs from a stochastic optimization perspective for both faster training and sampling.
We first find that the diffusion process of DMs accords with the stochastic optimization process of stochastic gradient descent (SGD) on a stochastic time-variant problem.  
Then, inspired by momentum SGD that uses both gradient and an extra momentum to achieve faster and more stable convergence than SGD, we integrate momentum into the diffusion process of DMs. 
This comes with a unique challenge of deriving the noise perturbation kernel from the momentum-based diffusion process. To this end, we frame the process as a Damped Oscillation system whose critically damped state---the kernel solution---avoids oscillation and yields a faster convergence speed of the diffusion process. Empirical results show that our FDM can be applied to several popular DM frameworks, \eg, VP~\citep{SGM}, VE~\citep{SGM}, and EDM~\citep{edm}, and reduces their training cost by about $50\%$ with comparable image synthesis performance on CIFAR-10, FFHQ, and AFHQv2 datasets. Moreover, FDM  decreases their sampling steps by about $3\times$ to achieve similar performance under the same samplers.
The code is available at \url{https://github.com/sail-sg/FDM}. 
\end{abstract}

\section{Introduction}
\label{sec:intro}
Diffusion Models (DMs)~\citep{sohl2015deep,DDPM,yang2022diffusion} show impressive generative capability in modeling complex data distribution such as the synthesis of image~\citep{dhariwal2021diffusion,rombach2022high}, speech ~\citep{kong2020diffwave,chen2020wavegrad}, and video~\citep{ho2022imagen,ho2022video,khachatryan2023text2video}.
However, their slow and costly training and sampling pose significant challenges in broadening their applications. Thus, existing remedies such as loss reweighting~\citep{IDDPM,SNR} and neural network refinement~\citep{SGM,ryu2022pyramidal} focus on reducing the training cost, while distilled training~\citep{PD,CD} and efficient samplers~\citep{DDIM,lu2022dpm} target at fewer sampling steps.  Despite their efficacy, they are essentially post-hoc modifications and do not delve into the inherent mechanism of DMs: diffusion process, which can accelerate both training and sampling fundamentally. 

Our \textbf{first contribution} is a novel perspective on the diffusion process of DMs through the lens of stochastic optimization. We find that the forward diffusion process $\mathbf{x}_{t+1} = \alpha_t \mathbf{x}_{t} + \beta_t \epsilon_t$ ($\epsilon_t \sim \mathcal{N}(0, \mathbf{I})$) coincides with stochastic gradient descent~(SGD)~\citep{robbins1951stochastic} in optimizing a stochastic time-variant function $f(\mathbf{x}) =    \frac{1}{2} \mathbb{E}_{\zeta \sim \mathcal{N}(0, b\mathbf{I})} \lVert \mathbf{x} - \frac{\beta_t}{1 - \alpha_t} \zeta \rVert_2^2$. At the $t$-th iteration, it samples a minibatch samples $\{\zeta_k\}_{k=1}^b$ to compute stochastic gradient $\mathbf{g}_t = \mathbf{x}_t - \frac{\beta_t}{b(1 - \alpha_t)} \sum\nolimits_{k=1}^b \zeta_k= \mathbf{x}_t - \frac{\beta_t}{1 - \alpha_t} \epsilon_t$,  and then updates parameter $\mathbf{x}$ via the following stochastic optimization process:
\begin{equation}
\label{eq:GD}
\mathbf{x}_{t+1} =  \mathbf{x}_t - (1 - \alpha_t) \mathbf{g}_t    =  \alpha_t \mathbf{x}_t + \beta_{t} \epsilon_t, 
\end{equation}
where learning rate $(1 - \alpha_t) > 0$ is to align the diffusion and SGD processes. We will detail the connection in Section~\ref{subsec:DDPM}.

Our \textbf{second contribution} is the development of a novel \textit{Fast Diffusion Model (\ours)}, which accelerates DMs by incorporating momentum SGD~\citep{HB} into the diffusion process. At the $t$-th  update, momentum SGD not only utilizes a gradient but also an additional momentum $(\mathbf{x}_t - \mathbf{x}_{t-1})$. This momentum accumulates all previous gradients, providing a more stable descent direction compared to the single gradient $\mathbf{g}_t$ used in SGD. Consequently, it effectively reduces solution oscillation, resulting in faster convergence than traditional SGD in both theory and practice~\citep{bollapragada2022fast,loizou2017linearly,sebbouh2021almost}. Thanks to the equivalence in Eq.~\eqref{eq:GD}, in Section~\ref{subsec:hbdiffusion}, we are motivated to add the momentum to the forward diffusion process of DMs for faster convergence to the target distribution:
 \begin{equation}
 	\label{eq:DMHB1}
\mathbf{x}_{t+1} = \alpha_t \mathbf{x}_{t} + \beta_t \epsilon_t + \gamma(\mathbf{x}_{t} - \mathbf{x}_{t-1}), 
 \end{equation}
where $\gamma>0$ is a constant that controls the weight of momentum. Adding the momentum also accelerates the reverse process, \ie, DM sampling, since the reverse process is determined by the forward process and thus gains the same acceleration. Therefore, adding the momentum can accelerate both training and sampling. 
 
However, as we will discuss in Section~\ref{sec:method}, the crucial perturbation kernel $p(\mathbf{x}_t | \mathbf{x}_0)$ that is required for efficient training and sampling cannot be easily derived by the momentum-based diffusion process in Eq.~\eqref{eq:DMHB1}. {To this end, we leverage recent advances in DMs to convert this discrete diffusion process to its continuous form~\citep{edm}, thereby deriving an analytical solution for the perturbation kernel (Section~\ref{subsec:hbdiffusion}).}
So far, we can plug the  momentum diffusion process of Eq.~\eqref{eq:DMHB1} into several popular and effective diffusion models, including VP~\citep{SGM}, VE~\citep{SGM}, and EDM~\citep{edm}, and build our corresponding \ours (Section~\ref{subsec:FDM}).

 Extensive experimental results in Section~\ref{sec:experiments} show that for representative and popular DMs, including EDM, VP and VE, our FDM greatly accelerates their training process by about $2\times$, and improves their sample generation process by about $3\times$. For example, as shown in Figure~\ref{fig:curve}(a), on three datasets, our EDM-FDM uses about half the training cost (million training samples, Mimg) to achieve similar image synthesis performance of EDM. Moreover, for the sample generation process, EDM-FDM achieves similar performance as EDM by using about $1/3$ inference cost in Figure~\ref{fig:curve}(b).

\begin{figure}
    \centering
    \includegraphics[width=\textwidth]{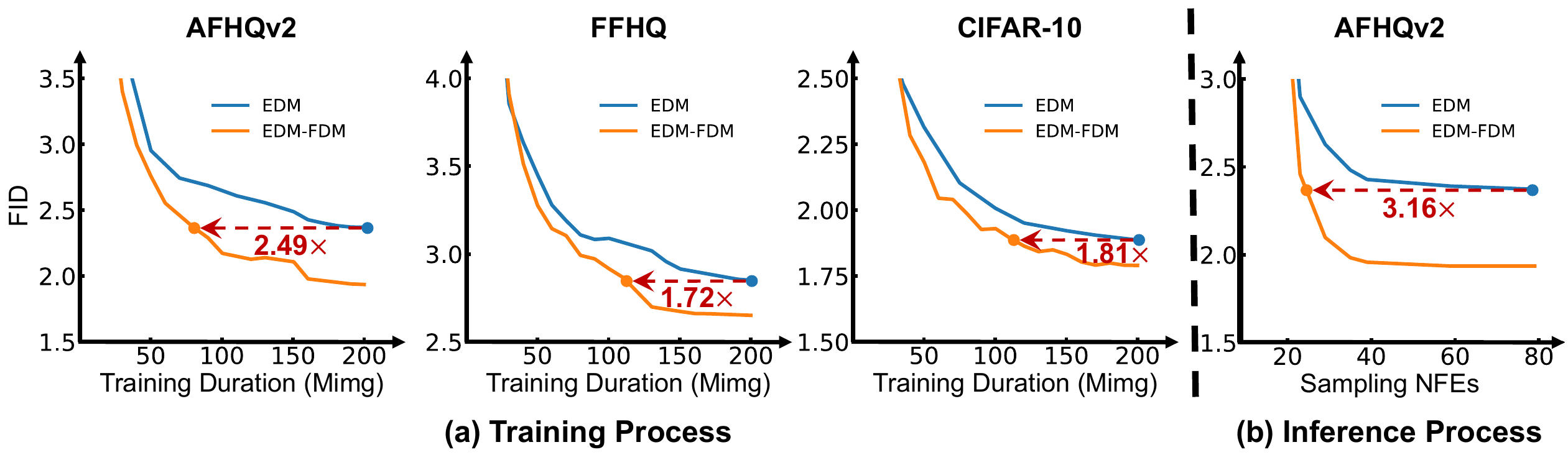}
    \vspace{-18pt}
    \caption{Training   and inference processes of EDM and our EDM-FDM. With momentum, EDM-FDM achieves $2 \times$ training acceleration on average, and $3.16 \times$ sampling acceleration on AFHQv2.}
    \label{fig:curve}
    \vspace{-6pt}
\end{figure}

\section{Related Work}
Diffusion-based generative models (DMs)~\citep{sohl2015deep,song2019generative,SGM,DDPM,yang2022diffusion} are powerful tools for complex data modeling and generation.
Their robust and stable capabilities for complex data modeling have also led to their successful application in various domains, such as text-to-video generation~\citep{ho2022imagen,ho2022video,khachatryan2023text2video}, image synthesis~\citep{dhariwal2021diffusion,ramesh2022hierarchical,gu2022vector}, manipulation~\citep{rombach2022high,lugmayr2022repaint,CD}, and audio generation~\citep{kong2020diffwave,chen2020wavegrad,mittal2021symbolic}, \etc. 
However, their slow and costly training and sampling limit broader applications. To address these efficiency issues, two family of methods have been proposed. One is known as sampling-efficient DMs, including learning-free sampling and learning-based sampling. Learning-free sampling relies on discretizing the reverse-time SDE~\citep{SGM, CLD} or ODE~\citep{lu2022dpm, lu2022dpmpp, DDIM, edm}, while learning-based sampling mainly depends on knowledge distillation~\citep{meng2022distillation,PD, CD}. The other family comprises training-efficient DMs. These methods involve optimization to the loss function~\citep{gu2022vector, edm,IDDPM,SNR,song2020improved}, latent space training~\citep{vahdat2021score, gao2023masked}, neural network refinement~\citep{SGM, ryu2022pyramidal},  and diffusion process improvement~\citep{CLD, SPF, lipman2022flow}. Contrasted with our FDM, other momentum-based approaches in diffusion process improvement necessitate an augmented momentum space~\citep{CLD, pandey2023generative}, and thus double the memory consumption compared to conventional DMs (Section~\ref{ablationstudy}). Moreover, these prior works are known to suffer from numerical instability due to the complex formulation of the perturbation kernel. In contrast, like most DMs, our FDM solely possesses the sample space and has a streamlined perturbation kernel, and our FDM enjoys better performance as shown in Section~\ref{ablationstudy}.

\section{Preliminaries}\label{Preliminary}
\textbf{Momentum SGD.} Let us consider an optimization problem, $\min_{\mathbf{x} \in \mathbb{R}^n} f(\mathbf{x})$, where 
$f$ is a differentiable function. Then, one often uses  stochastic gradient descent (SGD)~\citep{robbins1951stochastic} to update  the parameter $\mathbf{x}$ per iteration: 
\begin{equation}
	\mathbf{x}_{k+1} = \mathbf{x}_k - \alpha \mathbf{g}_k,
\end{equation}
where $ \mathbf{g}_k$ denotes the stochastic  gradient, and $\alpha$ is a step size. Momentum SGD~\citep{HB} is proposed to accelerate the convergence of SGD:
\begin{equation}
	\label{eq:hbdisc}
	\mathbf{x}_{k+1} = \mathbf{x}_k - \alpha \mathbf{g}_k + \beta (\mathbf{x}_{k} - \mathbf{x}_{k-1}) = \mathbf{x}_k - \alpha \mathbf{g}_k - \alpha \sum\nolimits_{i=0}^{k-1} \beta^{k-i}\mathbf{g}_i,
\end{equation}
where  $\beta$ is a constant. The momentum $\beta(\mathbf{x}_{k} - \mathbf{x}_{k-1})$ $=-\sum_{i=0}^{k-1} \alpha \beta^{k-i}\mathbf{g}_i$  accumulates all the past gradients and thus provides a more stable descent direction than the single gradient $\mathbf{g}_t$ used in SGD. Consequently,  momentum SGD can effectively avoid oscillation which often exists around the loss regions of high geometric curves, and thus achieves much faster convergence speed than SGD in both theory and practice~\citep{bollapragada2022fast,loizou2017linearly,sebbouh2021almost}.

\textbf{Diffusion Models (DMs).} 
DMs consist of a forward diffusion process and a corresponding reverse process~\citep{DDPM}. For the forward process, DMs gradually add Gaussian noises into the vanilla sample $\mathbf{x}_0\sim p_{\text{data}}(\mathbf{x}_0)$, and  generates a series of noisy samples $\mathbf{x}_t$:
\begin{equation}
	\label{eq:forward}
	p(\mathbf{x}_t | \mathbf{x}_0) = \mathcal{N}(\mathbf{x}_t; \mu_t\mathbf{x}_0, \mu_t^2 \sigma_t^2 \mathbf{I}),
\end{equation}
where $\mu_t$ varies along time step $t$. In particular, $p(\mathbf{x}_t | \mathbf{x}_0)$ is also widely known as the \textit{perturbation kernel} applied to the original sample $\mathbf{x}_0$. For VEs~\citep{SGM}, $\mu_t \equiv 1$; more generally, for others~\citep{DDPM, edm, rombach2022high}, $\lim\nolimits_{t \to \infty} \mu_t=0$. The noise level $\sigma_t$ increases monotonically with $t$.
Accordingly, one  can easily obtain the noisy sample $\mathbf{x}_t$ at time step $t$ by $\mathbf{x}_t = \mu_t (\mathbf{x}_0 + \sigma_t \epsilon_t)$,
where $\epsilon_t \sim \mathcal{N}(0, \mathbf{I})$ denotes a  Gaussian noise. We use  $\hat{\mathbf{x}}_t = \mathbf{x}_0 + \sigma_t \epsilon_t$ to denote the non-scaled (w/o $\mu_t$) noisy sample throughout the paper unless specified. 

The reverse process from a Gaussian noise (\ie~$\mathbf{x}_T$) to the clean sample $\mathbf{x}_0$ is called sampling. Given the score function $\nabla \log p_t(\hat{\mathbf{x}}_t; \sigma_t)$ which indicates the direction of the higher data density~\citep{SGM},  the reverse process  is  formulated by a probability flow ODE~\citep{edm}:
\begin{equation}
	\label{eq:sample_ODE}
	\mathrm{d}\mathbf{x} = -\dot{\sigma}_t \sigma_t \nabla \log p_t(\hat{\mathbf{x}}_t; \sigma_t) \mathrm{d}t,
\end{equation}
where $\dot{\sigma}_t$ denotes a time derivative of ${\sigma}_t$.  
Given any noise $\hat{\mathbf{x}}_T \sim \mathcal{N}(0, \sigma_T^2 \mathbf{I})$, one  can solve Eq.~\eqref{eq:sample_ODE} via any numerical ODE solver, and generate real sample $\hat{\mathbf{x}}_0 \sim p_{\text{data}}(\mathbf{x})$. Since the score function $\nabla \log p_t(\hat{\mathbf{x}}_t; \sigma_t)$ is inaccessible in general, one can follow \citet{edm} to use a learnable score network $D_{\theta}(\hat{\mathbf{x}}_t, \sigma_t)$ to estimate it, formulated as:
$\nabla \log p_t(\hat{\mathbf{x}}_t; \sigma_t) = (D_{\theta}(\hat{\mathbf{x}}_t, \sigma_t) - \hat{\mathbf{x}}_t) / \sigma_t^2$.

For sampling quality~\citep{DDPM, DDIM, gDDIM, rombach2022high},  the score network is often reparameterized to predict the noise. For example, in VP~\citep{SGM}, the score network is defined as $D_{\theta}(\hat{\mathbf{x}}_t, \sigma_t) := \hat{\mathbf{x}}_t - \sigma_t F_{\theta}(\mu_t \hat{\mathbf{x}}_t, t)$, where a  network $F_{\theta}$ parameterized by  $\theta$   predicts the noise $\epsilon_t$.  Then one can train $F_{\theta}$ via minimizing  the loss:
\begin{equation}
	\label{eq:edmloss}
	\mathcal{L}(D_{\theta}; t) := \mathbb{E}_{\mathbf{x}_0 \sim p_{\text{data}}(\mathbf{x})} \mathbb{E}_{\mathbf{x}_t \sim p(\mathbf{x}_t|\mathbf{x}_0)} \left[ \lambda(\sigma_t) \lVert D_{\theta}(\hat{\mathbf{x}}_t, \sigma_t) - \mathbf{x}_0 \rVert^2_2 \right],
\end{equation}
where $\lambda(\sigma_t)$ denotes the standard loss weight to balance the loss at different noise levels $\sigma_t$~\citep{IDDPM, edm, SNR}.

\section{Methodology}
\label{sec:method} 
Here we first establish a theoretical connection between the forward diffusion process of DDPM with SGD in Section~\ref{subsec:DDPM}, and then propose our momentum-based diffusion process in Section~\ref{subsec:hbdiffusion}. Finally, we derive the Fast Diffusion Model (FDM) based on the proposed momentum-based diffusion process and elaborate on how to apply \ours to existing diffusion frameworks in Section~\ref{subsec:FDM}.

\subsection{Connection between DDPM and SGD}
\label{subsec:DDPM}
Considering a stochastic quadratic time-variant function $f(\mathbf{x}) = \mathbb{E}_{\zeta \sim \mathcal{N}(0, b\mathbf{I})} \frac{1}{2} \lVert \mathbf{x} - \frac{\beta_t}{1 - \alpha_t} \zeta \rVert_2^2$, where $\beta_t>0$ and $\alpha_t>0$ vary with the time step $t$, and $\zeta$ is a standard Gaussian variable,  SGD  computes  the $t$-step stochastic gradient $\mathbf{g}_t$ on a minibatch of training data $\{\zeta_k \}_{k=1}^b$   and update the variable:
\begin{equation}\label{gradient}
	\mathbf{g}_t = \mathbf{x}_t - \frac{\beta_t}{b(1 - \alpha_t)} \sum\nolimits_{k=1}^b \zeta_k= \mathbf{x}_t - \frac{\beta_t}{1 - \alpha_t} \epsilon_t,
\end{equation}
where $\epsilon_t = \frac{1}{b } \sum\nolimits_{k=1}^b \zeta_k$ and thus also satisfies $\mathcal{N}(0, \mathbf{I})$.  
If we use the learning rate $\eta = 1 - \alpha_t$, we can align SGD and DDPM by 
$\mathbf{x}_{t+1} = \alpha_t \mathbf{x}_t + \beta_t \epsilon_t$ as in Eq.~\eqref{eq:GD}.
By assuming $\alpha_t^2 + \beta_t^2 = 1$, one can observe that the forward diffusion process of DDPM~\citep{DDPM} and the stochastic optimization process of Eq.~\eqref{eq:GD} share the same formulation.  
Based on this connection, we can leverage more advanced optimization algorithms to improve the diffusion process of DMs.

\subsection{Momentum-based Diffusion Process}
\label{subsec:hbdiffusion}
\paragraph{Discrete momentum-based diffusion process.}
Inspired by momentum SGD in Section~\ref{Preliminary}, we can accelerate the diffusion process of DDPM by the momentum-based forward diffusion process as in Eq.~\eqref{eq:DMHB1}. 
Then, we provide a theoretical justification for the faster convergence speed of momentum-based forward diffusion process over the conventional diffusion process in DDPM on the constructed  function $f(\mathbf{x}) =\frac{1}{2} \mathbb{E}_{\zeta \sim \mathcal{N}(0, b\mathbf{I})}  \lVert \mathbf{x} - \frac{\beta_t}{1 - \alpha_t} \zeta \rVert_2^2$. This is because this function bridges the connection between DDPM and SGD, and can be used as the acceleration feasibility when using momentum SGD to improve the forward diffusion process of DDPM. We summarize our main results in Theorem~\ref{thm:rate}. 
\begin{theorem}[{Faster convergence of momentum SGD}]
\label{thm:rate}
Suppose $\frac{\beta_t}{1 -\alpha_t} \leq \sigma\ (\forall\ t)$ holds in the function $f(\mathbf{x}) = \mathbb{E}_{\zeta \sim \mathcal{N}(0, b\mathbf{I})} \frac{1}{2} \lVert \mathbf{x} - \frac{\beta_t}{1 - \alpha_t} \zeta \rVert_2^2$. Define the learning rate as $\eta_t = 1 -\alpha_t$ and denote the initial error as $\Delta = \lVert \mathbf{x}_0 - \mathbf{x}^* \rVert_2$, where $\mathbf{x}_0$ is a starting point shared by momentum SGD or vanilla SGD, and $\mathbf{x}^* = 0$ is the optimal mean. Under these conditions, the momentum SGD satisfied $\|\mathbb{E}[\mathbf{x}_{k}-\mathbf{x}^*]\| \leq \prod_{j=0}^{k-1} (1 - \sqrt{\eta_j}) \Delta$, whereas the vanilla SGD satisfied $\|\mathbb{E}[\mathbf{x}_{k}-\mathbf{x}^*]\| \leq \prod_{j=0}^{k-1} (1 - {\eta_j}) \Delta$.
\end{theorem}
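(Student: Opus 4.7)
The plan is to pass to expectations and reduce the stochastic dynamics to a purely deterministic linear recursion on $m_t := \mathbb{E}[\mathbf{x}_t]$. Since $\epsilon_t \sim \mathcal{N}(0, \mathbf{I})$ has zero mean and is independent of the past iterates, the noise terms in both Eq.~\eqref{eq:GD} and Eq.~\eqref{eq:DMHB1} vanish upon taking expectations. For vanilla SGD this immediately yields the first-order recursion $m_{t+1} = \alpha_t m_t = (1 - \eta_t) m_t$ which, combined with $m_0 = \mathbf{x}_0$ and $\mathbf{x}^* = 0$, telescopes to $\lVert m_k - \mathbf{x}^* \rVert = \prod_{j=0}^{k-1} (1 - \eta_j)\,\Delta$, establishing the SGD half of the theorem.

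For momentum SGD, taking expectations of Eq.~\eqref{eq:DMHB1} produces the second-order recursion $m_{t+1} = (\alpha_t + \gamma) m_t - \gamma\, m_{t-1}$. I would encode this as a matrix recursion $\mathbf{m}_{t+1} = A_t \mathbf{m}_t$ with $\mathbf{m}_t = (m_t, m_{t-1})^{\top}$ and $A_t = \begin{pmatrix} \alpha_t + \gamma & -\gamma \\ 1 & 0 \end{pmatrix}$, whose characteristic polynomial is $\lambda^2 - (\alpha_t + \gamma)\lambda + \gamma$. Invoking the standard \emph{critical-damping} choice $\gamma = (1 - \sqrt{\eta_t})^2$ makes the discriminant $(\alpha_t + \gamma)^2 - 4\gamma$ vanish under $\alpha_t = 1 - \eta_t$ (via the identity $\alpha_t + \gamma = 2\sqrt{\gamma}$), leaving a single double eigenvalue $r_t = 1 - \sqrt{\eta_t}$; this is exactly the per-step acceleration factor asserted by the theorem.

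The remaining step is to translate the per-step spectral rate $r_t$ into an operator-norm bound on $m_k$. In a rescaled Euclidean norm tailored to the Jordan form of $A_t$, the action reduces to multiplication by $r_t$, with any sub-dominant Jordan contribution absorbed using the uniform bound $\beta_t/(1 - \alpha_t) \le \sigma$ from the hypothesis. Telescoping over $t = 0, \dots, k-1$ then yields $\lVert m_k - \mathbf{x}^* \rVert \le \prod_{j=0}^{k-1} (1 - \sqrt{\eta_j})\,\Delta$. The claimed strict acceleration over SGD is then immediate, since $\sqrt{\eta_j} > \eta_j$ whenever $\eta_j \in (0, 1)$.

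The main obstacle is this last telescoping step under a time-varying schedule: $A_s$ and $A_t$ do not commute when $\alpha_t$ varies, so the product of per-step eigenvalues does not automatically become the operator norm of the product. Either a step-wise basis change diagonalizing (or Jordan-izing) each $A_t$ must be coordinated across iterations, or an equivalent Lyapunov argument with a state-dependent norm $V(m_t, m_{t-1}) = \lVert m_t \rVert^2 + c_t \lVert m_t - r_{t-1} m_{t-1} \rVert^2$ contracting by $r_t^2$ per step must be engineered. The hypothesis $\beta_t/(1 - \alpha_t) \le \sigma$ is exactly what controls the non-spectral slack that would otherwise appear as a multiplicative overhead spoiling the clean product $\prod_{j=0}^{k-1}(1 - \sqrt{\eta_j})$.
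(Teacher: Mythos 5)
Your plan follows essentially the same route as the paper's proof: take expectations so the Gaussian noise drops out, telescope the first-order recursion for vanilla SGD, and recast the momentum recursion as a $2\times 2$ companion-matrix product $\prod_j A_j$ with the critical-damping choice $\gamma=(1-\sqrt{\eta_t})^2$, which (as you correctly compute via $\alpha_t+\gamma=2\sqrt{\gamma}$) collapses the spectrum of $A_t$ to the double eigenvalue $1-\sqrt{\eta_t}$. Up to this point you match the paper step for step; the paper likewise unrolls the noisy recursion, kills the noise terms with $\mathbb{E}[\epsilon_j]=0$, and writes the deterministic part as a product of the matrices $A_j$.

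The gap is the step you defer. You explicitly leave open how to pass from the per-step spectral radius $1-\sqrt{\eta_j}$ of each $A_j$ to the operator-norm bound $\bigl\lVert \prod_j A_j \bigr\rVert \le \prod_j (1-\sqrt{\eta_j})$, noting (correctly) that the $A_j$ do not commute and that each $A_j$ is defective at critical damping, so its norm strictly exceeds its spectral radius. A proposal that names the central difficulty without closing it is not a proof; and your suggested escape hatch is misdirected: the hypothesis $\beta_t/(1-\alpha_t)\le\sigma$ bounds the noise scale $\sigma_t$, which is already irrelevant once expectations are taken, and it cannot absorb the polynomial-in-$k$ growth coming from powers of a Jordan block or the slack from non-commuting basis changes. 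For what it is worth, the paper handles this step by simply asserting $\lVert A_k\rVert \le \lambda_{\max}(A_k) = 1-\sqrt{\eta_k}$ and multiplying the bounds, so the obstacle you identified is precisely the point where the paper's own argument is thinnest; a rigorous closure would require either a uniformly equivalent weighted norm in which each $A_j$ is a contraction by $1-\sqrt{\eta_j}$ (your Lyapunov suggestion), at the cost of a constant prefactor in front of $\Delta$, or a restriction to a constant schedule $\eta_j\equiv\eta$ where the asymptotic rate statement can be salvaged. Also note a minor mismatch: the theorem's $\Delta=\lVert\mathbf{x}_0-\mathbf{x}^*\rVert$ must be promoted to the norm of the stacked initial state $(\mathbf{x}_1-\mathbf{x}^*,\mathbf{x}_0-\mathbf{x}^*)$ for the momentum bound, which your write-up (like the theorem statement) elides.
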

See its proof in Appendix~\ref{sec:Appendix_B}. 
Theorem~\ref{thm:rate} shows that at any arbitrary step $k$, the mean $\mathbb{E}[\mathbf{x}_{k}]$ of momentum SGD converges faster to the target mean $\bbx^* = 0$ than vanilla SGD, since $\prod_{j=0}^{k-1} (1 - \sqrt{\eta_j}) \Delta$ < $\prod_{j=0}^{k-1} (1 - {\eta_j}) \Delta$ when $\eta_t = 1 - \alpha_t < 1$.
Thanks to the above connection between the DDPM's forward diffusion process and SGD (Section~\ref{subsec:DDPM}), one can also expect a faster convergence speed of the momentum-based forward diffusion process.

For more clarity, we establish the specific relationship between $\bbx_t$ and $\bbx_0$. Here we follow   DDPM's notions to streamline our analysis. Specifically, we reformulate the momentum-based diffusion process into the form $\bbx_{t+1} = \sqrt{1-\beta}\bbx_{t} + \sqrt{\beta} \epsilon_t + \gamma(\bbx_{t} - \bbx_{t-1})$ and the vanilla diffusion process as $\bbx_{t+1} = \sqrt{1-\beta} \bbx_{t} + \sqrt{\beta}\epsilon_t$. 
 Denote by $\sigma_{t+1} =  \sqrt{1-\beta}\sigma_t +   \sqrt{\beta}\epsilon_t + \gamma(\sigma_t -\sigma_{t-1})$ the total accumulated noise at time $t$, with boundary $\sigma_0 = 0$ and $\sigma_1= \sqrt{\beta}\epsilon_{0}$. We summarize our result in Theorem~\ref{thm:2}.  
\begin{theorem}[{State transition in diffusion process}]
\label{thm:2}
For any  $\delta \in (0,4\sqrt{1-\sqrt{\alpha}})$ with $\alpha=1-\beta$, if $\gamma=2-2\sqrt{1-\sqrt{\alpha}}-\sqrt{\alpha}+\delta$, then
\begin{equation*}
\begin{aligned}
\bbx_{T}&=\zeta_T \bbx_0+\kappa_{T} \epsilon +\gamma\sum\nolimits_{t=2}^{T-1}   \alpha^{\frac{T-t-1}{2}} (\sigma_t - \sigma_{t-1}), & (\text{momentum-based diffusion process}) \\
 {\mathbf{x}}_{T}&= \sqrt{\alpha^{T}} \mathbf{x}_0+ \sqrt{1 -\alpha^{T}}\epsilon, & (\text{vanilla diffusion process})
\end{aligned} 
\end{equation*}
where $\epsilon \sim \mathcal{N}(0,\mathbf{I})$, $\zeta_T=O(\sqrt{\gamma^T})$, $\kappa_T=\sqrt{1 -\alpha^{T}+ \gamma^2 \alpha^{T-2}(1-\alpha)}$, $\mathbf{x}_0$ is a starting point.
\end{theorem}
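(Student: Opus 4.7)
The plan is to treat both recursions as linear in their driving noise, handle the vanilla case by direct unrolling, and handle the momentum case by splitting into a deterministic part plus a Duhamel-style relative noise decomposition. For the vanilla recursion $\bbx_{t+1} = \sqrt{\alpha}\bbx_t + \sqrt{\beta}\epsilon_t$, unrolling gives $\bbx_T = \sqrt{\alpha^T}\bbx_0 + \sum_{s=0}^{T-1}\alpha^{(T-1-s)/2}\sqrt{\beta}\epsilon_s$, and independence of the $\epsilon_s$ together with $\beta = 1-\alpha$ collapses the noise sum into a single Gaussian of variance $1 - \alpha^T$, yielding the second line of the claim.

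For the momentum recursion I would split $\bbx_t = a_t + \sigma_t$, where $a_t$ satisfies the same second-order recursion with $a_0 = \bbx_0$, $a_1 = \sqrt{\alpha}\bbx_0$, and $\sigma_t$ is the noise component as defined in the excerpt. Rewriting as $a_{t+1} - (\sqrt{\alpha}+\gamma)a_t + \gamma a_{t-1} = 0$ and solving the characteristic polynomial $r^2 - (\sqrt{\alpha}+\gamma)r + \gamma = 0$, I would identify the two critical-damping values $\gamma_0 = (1 - \sqrt{1-\sqrt{\alpha}})^2 = 2 - 2\sqrt{1-\sqrt{\alpha}} - \sqrt{\alpha}$ and $\gamma_0' = \gamma_0 + 4\sqrt{1-\sqrt{\alpha}}$, so the prescription $\gamma = \gamma_0 + \delta$ with $\delta \in (0, 4\sqrt{1-\sqrt{\alpha}})$ is precisely the interval on which the discriminant $(\sqrt{\alpha}+\gamma)^2 - 4\gamma$ is strictly negative. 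The two roots are then complex conjugates, and Vieta's identity $r_+ r_- = \gamma$ gives $|r_\pm| = \sqrt{\gamma}$; writing $a_T = c_+ r_+^T + c_- r_-^T$ and solving for $c_\pm$ from the initial data immediately yields $\zeta_T = O(\sqrt{\gamma^T})$.

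For the noise part I would decompose $\sigma_t$ relative to the vanilla-noise process $y_t$ driven by the same $\epsilon_t$ (so $y_T = \sqrt{1-\alpha^T}\epsilon$). Setting $v_t = \sigma_t - y_t$ and subtracting the two recursions yields the first-order driven recursion $v_{t+1} = \sqrt{\alpha}v_t + \gamma(\sigma_t - \sigma_{t-1})$ with $v_0 = v_1 = 0$, which unrolls to $v_T = \gamma\sum_{t=1}^{T-1}\alpha^{(T-1-t)/2}(\sigma_t - \sigma_{t-1})$. I would then peel off the $t=1$ summand (using $\sigma_1 - \sigma_0 = \sqrt{\beta}\epsilon_0$), fold it with $y_T$ into a single effective Gaussian $\kappa_T\epsilon$ of variance $(1-\alpha^T) + \gamma^2\alpha^{T-2}\beta = \kappa_T^2$, and leave the $t=2,\ldots,T-1$ contributions as the explicit momentum-correction summation in the statement.

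The step I expect to be the main obstacle is this last combining move: $\epsilon_0$ appears both in $y_T$ and in the peeled-off atom $\gamma\alpha^{(T-2)/2}\sqrt{\beta}\epsilon_0$, so a literal pathwise variance computation picks up a cross-covariance $2\gamma\alpha^{T-3/2}(1-\alpha)$ not present in the stated $\kappa_T^2$. Reconciling this will require either invoking the convention that $\kappa_T\epsilon$ and the $(\sigma_t - \sigma_{t-1})$ summands are to be read as distributionally independent pieces of the noise budget (consistent with the way $\sigma_t$ is recycled inside the sum), or absorbing the mixed term back into the $t \geq 2$ sum by shifting its lower index. The characteristic-root analysis giving $\zeta_T$ is a clean discrete-oscillator calculation once the critical-damping boundary $\gamma_0$ has been identified; the real technical content lies in this Duhamel-style variance bookkeeping.
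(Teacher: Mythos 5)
Your proposal follows essentially the same route as the paper's proof: the same split into a deterministic part and an accumulated-noise part, the same characteristic-polynomial analysis showing the discriminant $(\sqrt{\alpha}+\gamma)^2-4\gamma$ is negative exactly for $\delta\in(0,4\sqrt{1-\sqrt{\alpha}})$ so that the conjugate roots have modulus $\sqrt{\gamma}$ (giving $\zeta_T=O(\sqrt{\gamma^T})$), and the same Duhamel-style unrolling of the noise recursion with the $t=1$ atom peeled off and folded into $\kappa_T\epsilon$. The cross-covariance obstacle you flag at the final combining step is real but is exactly the point the paper itself glosses over: its proof invokes ``the sum of two Gaussians is Gaussian with the sum of their variances'' without noting that $\check\varepsilon$ and the peeled-off term $\gamma\alpha^{(T-2)/2}\sqrt{1-\alpha}\varepsilon_0$ both contain $\varepsilon_0$, so the stated $\kappa_T$ is to be read under the bookkeeping convention you describe (the residual sum over $t\ge 2$ is likewise not independent of $\epsilon$), and your treatment is no less rigorous than the paper's.
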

See its proof in Appendix~\ref{sec:Appendix_C}. 
Theorem~\ref{thm:2} shows a clean formulation of $\bbx_t$ at any time step $t$. 
By comparing the coefficients of the mean (${\zeta_T} < \sqrt{\alpha^T}\ (\forall \alpha \in (0,1))$) and variance ($\kappa_T>\sqrt{1-\alpha^T}$), Theorem~\ref{thm:2} also affirms that our momentum-based approach ensures the convergence acceleration of the forward diffusion process towards the equilibrium, \ie, Gaussian distribution.

In this way, we show that momentum can accelerate the forward diffusion process from both optimization and diffusion aspects. Since the reverse process is decided by its forward diffusion process, one can expect the accelerated speed as the forward one, and also enjoy the acceleration effect.  Accordingly, momentum can intuitively accelerate not only the training process decided by both forward and reverse processes, but also the sampling process determined by the reverse process, which are empirically testified by our experiments in Section~\ref{subsec:inference}.

Although Theorem~\ref{thm:2} shows the relationship between $\bbx_t$ and $\bbx_0$, it does not fully support efficient training and sampling of DMs, since $\bbx_t$ depends on the computation of the accumulated noise $\sigma_t$ and $\sigma_{t-1}$ and is indeed computationally expensive. To solve the costly training and sampling issue, we leverage recent advances in DMs to convert this discrete diffusion process to its continuous form, and then derive a tractable analytical solution to compute the desired perturbation kernel $p(\mathbf{x}_t | \mathbf{x}_0)$.

\paragraph{Continuous momentum-based diffusion process.}
To compute the perturbation kernel $p(\mathbf{x}_t | \mathbf{x}_0)$, we follow EDM~\citep{edm} which is a SoTA DM, and separately define sample mean and variance. This is because this separation not only directly adopts the idea of the probability flow ODE during DM sampling --- matching specific marginal distributions --- but also streamlines the analysis. More specifically, following EDM, we remove the stochastic gradient noise in Eq.~\eqref{eq:DMHB1} because $\epsilon_t$ is independent of the $ \mathbf{x}_t$  in the $t$-th diffusion step. For the noise, we will handle it in Section~\ref{subsec:FDM}. In this way, we can obtain the deterministic version of Eq.~\eqref{eq:DMHB1} by denoting $\alpha_t:= (1 - s \alpha)$:
\begin{equation}
	\label{eq:deterministic}
	\mathbf{x}_{t+1} = \mathbf{x}_t - s \alpha \mathbf{x}_t +  \gamma(\mathbf{x}_t - \mathbf{x}_{t-1}).
\end{equation}
where $s$ is the step size, $\alpha > 0$ is a scaling parameter, and $\gamma$ is the momentum parameter.   
By defining $\mathbf{m}_{t} = (\mathbf{x}_{t+1} - \mathbf{x}_t)/\sqrt{s}$ and  $\gamma = 1 - \beta \sqrt{s}$ with $\beta > 0$,  Eq.~\eqref{eq:deterministic} can be rewritten as:
\begin{equation}
	\label{eq:deterministic2}
	\mathbf{m}_{t+1} = (1 - \beta \sqrt{s})\mathbf{m}_{t} - \sqrt{s} \alpha \mathbf{x}_t; \quad \mathbf{x}_{t+1} = \mathbf{x}_t + \sqrt{s}\mathbf{m}_t. 
\end{equation}
Let $s \to 0$ in Eq.~\eqref{eq:deterministic2}, we obtain  the following ODE by inverting the Euler Method~\citep{atkinson1991introduction}:
\begin{equation}
	\frac{\mathrm{d} \mathbf{x}(t)}{\mathrm{d} t} =  \mathbf{m}(t); \quad
	\frac{\mathrm{d} \mathbf{m}(t)}{\mathrm{d} t} = -\beta \mathbf{m}(t) - \alpha \mathbf{x}(t), 
\end{equation}
which  corresponds to a second-order ODE of $\ddot{\mathbf{x}}(t) + \beta \dot{\mathbf{x}}(t) + \alpha \mathbf{x}(t) = 0$.
Furthermore, we observe that the above ODE is a Damped Oscillation ODE which describes an oscillation system~\citep{mccall2010classical}. 
We then follow \citep{CLD} and carefully calibrate the hyper-parameters $\alpha$ and $\beta$ to achieve Critical Damping which allows an oscillation system to return to equilibrium faster without oscillation or overshoot. 
See the mechanism behind the Critical Damping in \citep{mccall2010classical}. 
In addition, we discuss the overshoot issue that exists in DM and illustrate how this Critical Damping state alleviates it in Appendix~\ref{subsec:Appendix_overshoot}.
Accordingly, we can obtain a Critically Damped ODE:
\begin{equation}
	\label{eq:critical}
	\ddot{\mathbf{x}}(t) + 2 \beta \dot{\mathbf{x}}(t) + \beta^2 \mathbf{x}(t) = 0.
\end{equation}

\subsection{Fast Diffusion Model}
\label{subsec:FDM}
\paragraph{Forward diffusion process.}  
By solving the Critically Damped ODE in Eq.~\eqref{eq:critical} 
with the boundary conditions $\mathbf{x}(0) = \mathbf{x}_0$ and $\dot{\mathbf{x}}(0) = 0$, we obtain the mean-varying process of $\mathbf{x}$:
\begin{equation}
	\label{eq:scaling}
	\mathbf{x}(t) = e^{-\mathcal{B}(t)}(1 + \mathcal{B}(t))\mathbf{x}_0,
\end{equation}
where $\mathcal{B}(t) = \int_0^t \beta(s) \mathrm{d}s$. Here we follow the  works~\citep{DDPM, CLD} and  define   $\beta(s)$ as a monotonically increasing linear function, since an iteration-adaptive step size  $\beta(s)$ often yields fast convergence while preserving image details at the early stage.

Then we first compute the perturbation kernel $p_{\ours}(\mathbf{x}_t | \mathbf{x}_0) = \mathcal{N}(\mathbf{x}_t; e^{-\mathcal{B}(t)}(1 + \mathcal{B}(t))\mathbf{x}_0, 0)$ of the deterministic flow in Eq.~\eqref{eq:scaling} at time step $t$. Next, we follow EDM \citep{edm}, and incorporate an independent Gaussian noise of variance $\sigma_t^2$  into the perturbation kernel:
\begin{equation}
	\label{eq:HBker}
	p_{\ours}(\mathbf{x}_t | \mathbf{x}_0) = \mathcal{N}(\mathbf{x}_t; e^{-\mathcal{B}(t)}(1 + \mathcal{B}(t))\mathbf{x}_0, \sigma_t^2 \mathbf{I}).
\end{equation}
{In practice, when applying our momentum to  a  certain DM,  we employ  the noise schedule $\sigma_t$ in the corresponding  DM. This can be interpreted as incorporating the expectation of momentum into the vanilla diffusion process of a certain DM, which facilitates seamless integration of our momentum-based diffusion process with existing DMs.}
Accordingly, given a  sample $ \mathbf{x}_0$, one can directly sample its  noisy version $\mathbf{x}_t$ at any time step $t$ via Eq.~\eqref{eq:HBker} for the forward process. 
For more stability, we adopt the reparameterize techniques in \citep{DDPM}, and sample the noisy sample at time step $t$ as
\begin{equation}
	\label{eq:HBkerada}
\mathbf{x}_t = e^{-\mathcal{B}(t)}(1 + \mathcal{B}(t))\mathbf{x}_0 + \sigma_t \epsilon_t,
\end{equation}
where $\mathbf{x}_0 \sim p_{\text{data}}(\mathbf{x})$ denotes the real sample, and $\epsilon_t \sim \mathcal{N}(0, \mathbf{I})$ denotes a random Gaussian noise.

\paragraph{Reverse sampling process.}
With the perturbation kernel defined in Eq.~\eqref{eq:HBker}, we can define our score network $D_{\ours}(\mathbf{x}, \sigma_t)$, and use it to denoise a noisy sample. A diffusion model, primarily determined by its perturbation kernel $p(\mathbf{x}_t | \mathbf{x}_0)$, can incorporate our momentum-based diffusion process simply by replacing their perturbation kernel with our $p_{\ours}(\mathbf{x}_t | \mathbf{x}_0)$.

\begin{table}[t]
\centering
\caption{Comparison of  score network $D(\mathbf{x}; \sigma_t)$  among VP, VE, EDM, and their FDM versions. The highlighted parts denote the modifications when integrating our FDM into vanilla DMs.}
\resizebox{\textwidth}{!}{%
\begin{tabular}{cll}
\toprule
& $D_{\theta}(\mathbf{x}; \sigma_t)$ & $D_{\ours}(\mathbf{x}; \sigma_t)$ {\footnotesize (ours)} \\
\midrule
\textbf{VP}~\citep{SGM}  
& 
$\mathbf{x} - \sigma_t F_{\theta}\left(e^{-\frac{1}{2} \int_0^t \beta(s) \mathrm{d}s } \mathbf{x}, t \right)$

& $\mathbf{x} - \sigma_t F_{\theta}\left(\colorbox{yellow!50}{$e^{-\int_0^{t^{\prime}} \beta(s) \mathrm{d}s}(1 + \int_0^{t^{\prime}} \beta(s) \mathrm{d}s) $} \mathbf{x}, t \right)$ 
\\
\textbf{VE}~\citep{SGM}  
& $\mathbf{x} + \sigma_t F_{\theta}\left( \mathbf{x}, \sigma_t \right)$
& $\mathbf{x} + \sigma_t F_{\theta}\left(\colorbox{yellow!50}{$e^{-\int_0^{t^{\prime}} \beta(s) \mathrm{d}s}(1 + \int_0^{t^{\prime}} \beta(s) \mathrm{d}s) $} \mathbf{x}, \sigma_t \right)$
\\
\textbf{EDM}~\citep{edm} 
& $\frac{\sigma^2_{\text{data}}}{\sigma_t^2 + \sigma^2_{\text{data}}} \mathbf{x} \! + \! \frac{\sigma_t \sigma_{\text{data}}}{\sqrt{\sigma_t^2 \! + \! \sigma^2_{\text{data}}}} F_{\theta}\!\left(\!\frac{1}{\sqrt{\sigma_t^2 \!+\! \sigma^2_{\text{data}}}} \mathbf{x}, \sigma_t\!\right)\!$ 
& $\frac{\sigma^2_{\text{data}}}{\sigma_t^2 \! + \! \sigma^2_{\text{data}}} \mathbf{x} \! + \! \frac{\sigma_t \sigma_{\text{data}}}{\sqrt{\sigma_t^2 \! + \! \sigma^2_{\text{data}}}} F_{\theta}\!\left(\!\colorbox{yellow!50}{$e^{-\int_0^{t^{\prime}} \beta(s) \mathrm{d}s}(1 \! + \! \int_0^{t^{\prime}} \beta(s) \mathrm{d}s) $} \mathbf{x}, \sigma_t\!\right)\!$ \\
\bottomrule
\end{tabular}%
}
\label{tab:framework}
\end{table}
We demonstrate this with popular and effective DMs including VP~\citep{SGM}, VE~\citep{SGM}, and EDM~\citep{edm}, showcasing the integration of our FDM into these models and building the corresponding score network $D_{\ours}(\mathbf{x}, \sigma_t)$. As shown in Table~\ref{tab:framework}, the score network $D_{\theta}(\mathbf{x}, \sigma_t)$ improved by modifying the input parts related to the perturbation kernel. This simple modification already accomplishes the replacement of the vanilla diffusion process with our momentum-based counterpart.

Moreover, in FDM, our time step $t$ is defined within the range $[0, 1]$, contrasting to several models like VE and EDM that use noise level $\sigma$ as the time variable and often exceed our definition range. To this end, we devise a reverse scaling function $\sigma^{-1}(\cdot)$ 
 to project the noise level $\sigma$ into $[0, 1]$ as follows:
\begin{equation}
	\label{eq:invsigma}
	t^{\prime} = \sigma^{-1}(\sigma) = \frac{\sigma - \sigma_{\min}}{\sigma_{\max} - \sigma_{\min}},
\end{equation}
where the projected variable $t^{\prime}$ is used  as  the time step of FDM  in Table~\ref{tab:framework}.  
We adopt this simple linear function due to its consistency with the accumulation law of stochastic gradient noise in SGD as shown in Eq.~\eqref{gradient}.  
After defining the score network $D_{\ours}(\mathbf{x}, \sigma_t)$, we  train network $F_{\theta}$ by minimizing 
\begin{equation}
	\label{eq:fdmloss}
	\mathcal{L}(D_{\ours}; t) := \mathbb{E}_{\mathbf{x}_0 \sim p_{\text{data}}(\mathbf{x})} \mathbb{E}_{\mathbf{x}_t \sim p_{\ours}(\mathbf{x}_t|\mathbf{x}_0)} \left[ \hat{\lambda}(\sigma_t) \lVert D_{\ours}(\hat{\mathbf{x}}_t, \sigma_t) - \mathbf{x}_0 \rVert^2_2 \right],
\end{equation}
where $\hat{\lambda}(\sigma) := \min\{\lambda(\sigma), \lambda_{\max} \cdot \tau_k \}$ is to   balance the training process. Here $\lambda(\sigma)$ is the standard  weight used by vanilla DM  in Eq.~\eqref{eq:edmloss}, and $\tau_k := \tau^k$ increases along with training iteration number $k$, where the constant  $\tau$ slightly bigger than $1$  is  to control the increasing speed. 
This is because as shown in Theorem~\ref{thm:2}, at time step $t$, the noisy sample $\mathbf{x}_{t}$ in the momentum-based forward process often contains more noise than  the vanilla forward process. Then, for momentum-based diffusion, predicting vanilla sample $\mathbf{x}_{0}$ from  $\mathbf{x}_{t}$ is more challenging, especially for the early training phase, which may yield abnormally large losses. So we use the clamp weight  $\hat{\lambda}(\sigma)$ to reduce the side effects of these abnormal losses,  and gradually increase $\hat{\lambda}(\sigma)$ along training iterations where the network progressively becomes stable and  better.

With the well-trained score network, $D_{\ours}(\mathbf{x}, \sigma_t)$, we can estimate the score function $\nabla \log p_t({\mathbf{x}}; \sigma_t)$ to generate realistic samples. Here  $\nabla \log p_t({\mathbf{x}}; \sigma_t)$  already implicitly incorporates our momentum term as evidenced by definition of $p_t({\mathbf{x}}; \sigma_t)$ in Table~\ref{tab:framework}, and thus  helps faster generation. Indeed, this implicit incorporation  allows  our FDM to be used into different  solvers~\citep{PSNR,lu2022dpm} as testified in Section~\ref{solvers}. 
Following \citet{edm}, we set $\sigma_t := t$ during the reverse process and discretize the time horizon $[t_{\min}, t_{\max}]$ into $N-1$ sub-intervals  $t_{i}=(t_{\min}{}^\frac{1}{\rho}+\frac{i}{N-1}(t_{\max}{}^{\frac{1}{\rho}}-t_{\min}{}^{\frac{1}{\rho}}))^\rho$ with $\rho=7$. Accordingly, we denoise the noisy sample $\hat{\mathbf{x}}_{t_{i+1}}$ to obtain a more clean sample  $\hat{\mathbf{x}}_{t_i}$  via the following discrete reverse sampling process of \ours:
\begin{equation}
	\label{eq:FDMreverse}
	\hat{\mathbf{x}}_{t_i} = \hat{\mathbf{x}}_{t_{i+1}} + (t_i - t_{i+1}) ( {\hat{\mathbf{x}}_{t_{i+1}} - D_{\ours}(\hat{\mathbf{x}}_{t_{i+1}}; t_{i+1})})/{t_{i+1}}.
\end{equation}
By iteratively computing the sample starting from  $\hat{\mathbf{x}}_T \sim \mathcal{N}(0, \sigma_T^2 \mathbf{I})$ in a reverse temporal order as Eq.~\eqref{eq:FDMreverse}, we can eventually estimate a sample $\hat{\mathbf{x}}_0$ which is often realistic sample drawn from $p_{\text{data}}(\mathbf{x})$.

\section{Experiments}
\label{sec:experiments}

\begin{table}[t]
\small
\caption{Image synthesis performance (FID) under  different million training images (Mimg). We use official NFEs (number of function evaluations) to synthesize, \eg, 35 for CIFAR-10 and 79 for others.
}
\centering
\begin{tabular}{|c|c|cc|cc|cc|}
\hline
\multirow{2}{*}{\textbf{Dataset}} & \multicolumn{1}{c|}{\multirow{2}{*}{\textbf{\begin{tabular}[c]{@{}c@{}}Duration\\ (Mimg)\end{tabular}}}} & \multicolumn{6}{c|}{\textbf{Method}}             \\
\cline{3-8}
& \multicolumn{1}{c|}{} & EDM  & EDM-FDM & VP   & VP-FDM & VE    & VE-FDM \\
\hline \hline
\multirow{4}{*}{\begin{tabular}[c]{@{}c@{}}CIFAR-10\\ $32 \times 32$\end{tabular}} 
& 50   & 5.76 & 2.17   & 2.74 & 2.74   & 49.47 & 10.01     \\
& 100  & 1.99 & 1.93   & 2.24 & 2.24   & 4.05  & 3.26      \\
& 150  & 1.92 & 1.83   & 2.19 & 2.13   & 3.27  & 3.00      \\
& 200  & 1.88 & 1.79   & 2.15 & 2.08   & 3.09  & 2.85      \\
\hline \hline
\multirow{4}{*}{\begin{tabular}[c]{@{}c@{}}FFHQ\\ $64 \times 64$\end{tabular}}    
& 50   & 3.21 & 3.27  & 3.07 & 12.49  & 96.49 & 93.72     \\
& 100  & 2.87 & 2.69  & 2.83 & 2.80   & 94.14 & 88.42     \\
& 150  & 2.69 & 2.63  & 2.73 & 2.53   & 79.20 & 4.73      \\
& 200  & 2.65 & 2.59  & 2.69 & 2.43   & 38.97 & 3.04      \\
\hline \hline
\multirow{4}{*}{\begin{tabular}[c]{@{}c@{}}AFHQv2\\ $64 \times 64$\end{tabular}}  
& 50   & 2.62 & 2.73  & 3.46 & 25.70   & 57.93 & 54.41     \\
& 100  & 2.57 & 2.05  & 2.81 & 2.65   & 57.87 & 52.45     \\
& 150  & 2.44 & 1.96  & 2.72 & 2.47   & 57.69 & 50.53     \\
& 200  & 2.37 & 1.93  & 2.61 & 2.39   & 57.48 & 47.30     \\
\hline
\end{tabular}%
\label{tab:training}
\end{table}

\paragraph{Implementation Details.}
We integrate our momentum-based diffusion process into various models including VP~\citep{SGM}, VE~\citep{SGM}, and EDM~\citep{edm}, and build our corresponding FDMs, \ie, VP-FDM, VE-FDM, and EDM-FDM respectively. For VP and VE, we follow their vanilla network architectures~\citep{SGM}, DDPM++ and NCSN++,  across all datasets. For EDM, we utilize their officially modified DDPM++. In our FDM, we retain these official architectures, and hyper-parameter setting from EDM, and we also adopt their default Adam~\citep{Adam} optimizer to train each model by a total of 200 million images (Mimg). We initially set $\lambda_{\max} = 5$ and compute $\tau$ according to the training batch size to ensure that $\lambda_{\max} \cdot \tau^k$ reaches $500$ after the model has been trained with $10,000$ images. 
See more details in Appendix~\ref{sec:Appendix_A}.

\paragraph{Evaluation Setting.} We test the popular scenarios, including  unconditional image generation on  \textbf{AFHQv2}~\citep{AFHQ} and \textbf{FFHQ}~\citep{FFHQ}, and the conditional image generation on \textbf{CIFAR-10}~\citep{CIFAR}. We follow EDM to use  $64 \times 64$-sized  \textbf{AFHQv2} and \textbf{FFHQ}.  
For evaluation, we use Exponential Moving Average (EMA) models to generate $50,000$ images using the EDM sampler based on Heun's $2^{\text{nd}}$ order method~\citep{atkinson1991introduction}, and report the Fréchet Inception Distance (FID) score.

\subsection{Training Cost Comparison}
\begin{wrapfigure}{r}{0.33\textwidth}
\vspace{-1.2em}
\small
\centering
\includegraphics[width=0.35\textwidth]{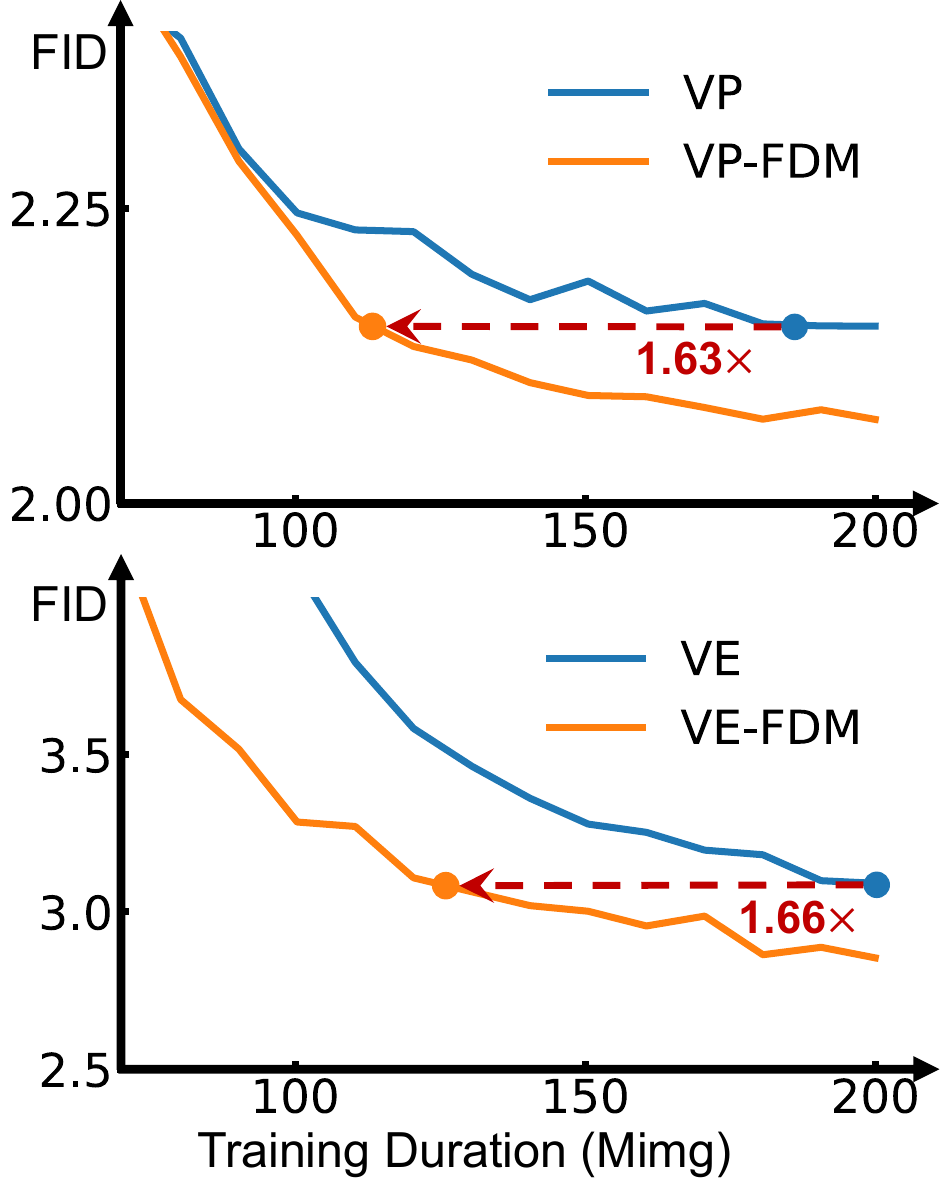}
\caption{Training acceleration of FDM on VP \& VE on CIFAR-10.}
\label{tab:warmupada}
\vspace{-1.4em}
\end{wrapfigure}
Table~\ref{tab:training} shows that on the three datasets,  FDMs consistently improve the vanilla  diffusion models, \ie, VP, VE, and EDM, under the same  training samples and the same sampling NFEs.
Specifically, EDM-FDM makes  $0.20$ average FID improvement over EDM  on the three datasets.  Similarly,  VE-FDM and VP-FDM also improves their corresponding  VE and VP by  $15.45$ and $0.18$ average FID, respectively.
The big improvement on VE is because the vanilla VE is not stable and often fails to achieve good performance as reported in EDM~\citep{edm}, while our momentum-based diffusion process  uses momentum term which indeed accumulates the past historical information, \eg, noise injection and sampling-decaying, and thus  provides  more stable synthesis guidance. As a result, VE-FDM  greatly improves VE. 

More importantly, FDMs also consistently boost the convergence or learning speed of the vanilla VP, VE, and EDM. For instance, to achieve the same or lower FID score of EDM using 200M training samples, EDM-FDM only needs about 110M, 110M, and 80M training images on CIFAR10, FFHQ, and AFHQv2, respectively, yielding about $2\times$ faster learning speed. As shown by Figure~\ref{tab:warmupada}, on VP and VE frameworks, one can also observe the learning acceleration of our FDM. Indeed, VP-FDM and VE-FDM on average improve the learning speed of VP and VE by $1.66\times$ and $2.06\times$ respectively across the three datasets. All these results demonstrate the superiority of our FDM in terms of learning acceleration and compatibility with different diffusion models.

We also observe that VP-FDM performs worse than VP when using 50M training samples in Table~\ref{tab:training}. This is because, at the beginning of training, the momentum term in VP-FDM does not accumulate sufficient  historical information, and is not very stable, especially on the complex AFHQv2 dataset which contains diverse animal faces.  But once  VP-FDM sees enough training samples,   its  momentum becomes stable and its performance is improved quickly, surpassing vanilla VP using 200M training samples by using only almost half training samples. This is indeed observed in Figure~\ref{fig:comparison}.

\begin{table}[t]
\small
\caption{Image synthesis performance (FID) under different inference costs  (number of function evaluations, NFEs) on AFHQv2 with EDM sampler. All models are trained on 200 Mimg. 
}
\centering
\begin{tabular}{|c|cc|cc|cc|}
\hline
\diagbox{\textbf{NFE}}{\textbf{Method}} & EDM & EDM-FDM & VP & VP-FDM & VE & VE-FDM \\ \hline\hline
25 & 2.78 & \cellcolor[HTML]{EFEFEF} 2.32 & 2.88 & \cellcolor[HTML]{EFEFEF} 2.59 & 61.04 & \cellcolor[HTML]{EFEFEF} 48.29 \\
49 & 2.39 & 1.93 & 2.64 & 2.41 & 57.59 & 47.49 \\ 
79 & \cellcolor[HTML]{EFEFEF} 2.37 & 1.93 & \cellcolor[HTML]{EFEFEF} 2.61 & 2.39 & \cellcolor[HTML]{EFEFEF} 57.48 & 47.30 \\ 
\hline
\end{tabular}
\label{tab:generation}
\end{table}
\begin{table}[!t]
\small
\caption{Image synthesis performance (FID) under different inference costs (number of function evaluations, NFEs) on AFHQv2 with DPM-Solver++. All models are trained on 200 Mimg.}
\centering
\begin{tabular}{|c|cc|cc|cc|}
\hline
\diagbox{\textbf{NFE}}{\textbf{Method}} & EDM & EDM-FDM & VP & VP-FDM & VE & VE-FDM \\ \hline\hline
25 & 2.60 & \cellcolor[HTML]{EFEFEF}2.09 & 2.99 & \cellcolor[HTML]{EFEFEF}2.64 & 59.26 & \cellcolor[HTML]{EFEFEF}49.51 \\
49 & 2.42 & 1.98 & 2.79 & 2.45 & 59.16 & 48.68 \\ 
79 & \cellcolor[HTML]{EFEFEF}2.39 & 1.95 & \cellcolor[HTML]{EFEFEF}2.78 & 2.42 & \cellcolor[HTML]{EFEFEF}58.91 & 48.66 \\ 
\hline
\end{tabular}
\label{tab:dpm}
\end{table}

\subsection{Inference Cost Comparison}\label{solvers}
\label{subsec:inference}
Here we compare the performance under the different number of function evaluations (NFEs, a.k.a sampling steps). For fairness, all models are trained on 200 million images (Mimg). Both the FDM (\eg, EDM-FDM) and the baselines (\eg, EDM) share the same state-of-the-art (SoTA) EDM sampler for alignment. The results are listed in Table~\ref{tab:generation}. Moreover, our results are consolidated by evaluations based on another SoTA numerical solver, DPM-Solver++~\citep{lu2022dpmpp} in Table~\ref{tab:dpm}.

From our experimental results, it is evident that under the same  NFEs, our FDM consistently outperforms baseline DMs, including VP, VE, and the SoTA EDM. Notably, EDM-FDM, with only 25 NFEs, surpasses the performance of EDM, which requires 79 NFEs, translating to an acceleration of $3.16 \times$. This efficiency improvement is mirrored in both VP-FDM and VE-FDM. Furthermore, when evaluating our FDM using the DPM-Solver++, the results suggest that our FDM is a robust and efficient framework that capable of improving the image sampling process across a wide range of diffusion models and advanced numerical solvers.

\begin{figure}[t]
\centering
\includegraphics[width=\textwidth]{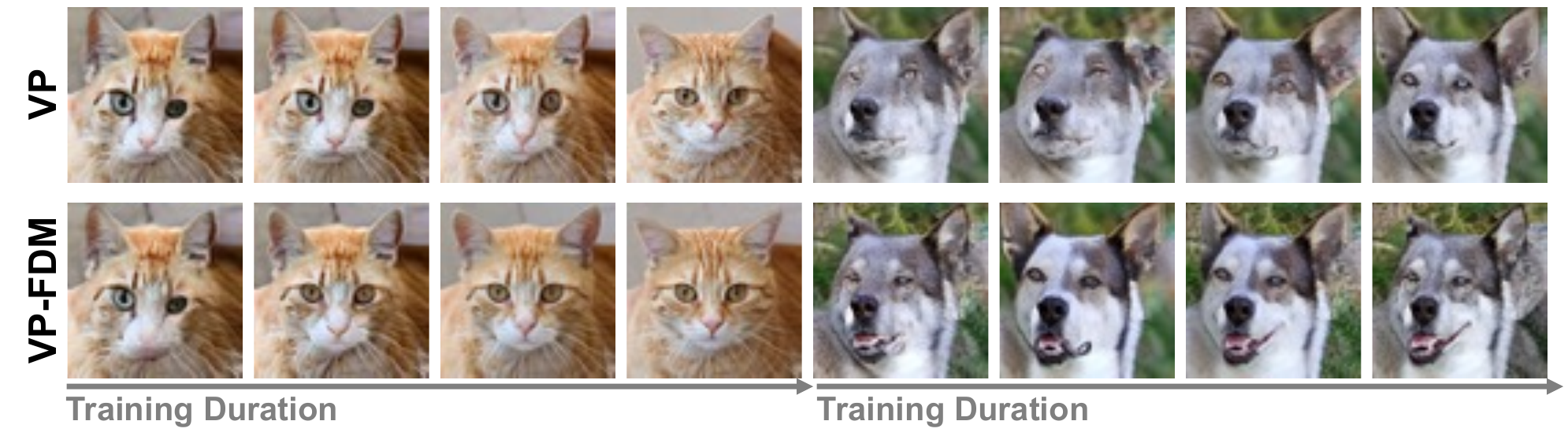}
\caption{Qualitative comparison of the synthesized images by VP and VP-FDM on AFHQv2 dataset. Images in each column are synthesized when models are respectively trained on  50, 100, 150, and 200 million images (Mimg). By comparison, our FDM significantly accelerates the training process.}
\vspace{-6pt}
\label{fig:comparison}
\end{figure}

\subsection{Ablation Study}\label{ablationstudy}
\begin{table}[t]
\centering
\small
\begin{minipage}{0.5\textwidth}
    \begin{center}
    \caption{Effects of our proposed loss weight. 
    }
    \label{tab:warmup}
    \begin{tabular}{lcccc}
    \toprule
    \multicolumn{1}{c}{\multirow{2}{*}{\textbf{Method}}} & \multicolumn{4}{c}{\textbf{Duration (Mimg)}} \\
    \multicolumn{1}{c}{}                        & 50   & 100 & 150  & 200  \\ 
    \midrule
    EDM                                         & 5.76 & 1.99                    & 1.92 & 1.88 \\
    EDM w/ $\hat{\lambda}(\sigma)$                      & 2.14 & 1.92                    & 1.89 & 1.87 \\ 
    FDM w/o $\hat{\lambda}(\sigma)$                     & 2.29 & 1.95                    & 1.85 & 1.82 \\
    FDM                                         & 2.17 & 1.93                    & 1.83 & 1.79 \\
    \bottomrule
    \end{tabular}
    \end{center}
\end{minipage}%
\hspace{1.2em}
\begin{minipage}{0.4\textwidth}
    \begin{center}
    \caption{Ablation study on step size. 
    }
    \label{tab:stepsize}
    \begin{tabular}{lcccc}
    \toprule
    \multicolumn{1}{c}{\multirow{2}{*}{\textbf{Method}}} & \multicolumn{4}{c}{\textbf{Duration (Mimg)}} \\
    \multicolumn{1}{c}{}                                 & 50      & 100   & 150    & 200    \\ \midrule
    VP                                                   & 2.74    & 2.24                      & 2.19   & 2.15   \\
    VP $2 \times$                                                & 3.12    & 2.49                      & 2.36   & 2.29   \\
    VP $3 \times$                                                & 3.51    & 2.80                      & 2.54   & 2.42   \\
    VP-FDM                                                  & 2.74    & 2.24                      & 2.13   & 2.08  \\
    \bottomrule
    \end{tabular}
    \end{center}
\end{minipage}
\end{table}

\paragraph{Loss weight warm-up strategy.} 
Our loss weight warm-up strategy is designed to stabilize FDM training during the initial training stage. This is because,  at the early training phase, the network model is not stable in the vanilla DMs, and would become worse when plugging  momentum into the diffusion process, since  momentum indeed yields more noisy  $\mathbf{x}_{t}$ than vanilla DMs, and enforcing the model to predict the bigger noise in  $\mathbf{x}_{t}$ leads to unstable training caused by possible abnormal training loss, \etc. To address this issue, we propose the loss weight warm-up strategy to gradually increase the weight of training loss along with training iterations. Table~\ref{tab:warmup} shows  an improvement given by our  loss weight warm-up strategy during the early stages of training on CIFAR-10. 

\paragraph{Momentum v.s. faster converting sample into Gaussian noise   via  larger  step size.}  To fast converse one sample into a Gaussian noise in the forward diffusion process, we test VP on CIFAR-10 by using double or triple step size function $\beta(t)$ in Table~\ref{tab:framework}, respectively denoted by ``VP $2\times$'' and ``VP $3 \times$''.  Table~\ref{tab:stepsize} shows that an increasing  step size   fails to accelerate model training and yields inferior results. In contrast, our FDM accelerates  the training process by accumulating historical information, and adaptively adjusting the step size during different diffusion stages. Along with the forward diffusion process,  the momentum in FDM accumulates more noise and gradually becomes large, while in the later diffusion process, momentum gradually decreases since the image is approaching the Gaussian noise. So  this adaptive step size in FDM according to the diffusion process  is superior to the approach of simply increasing step size to fast convert a sample into a Gaussian noise. 

\begin{wraptable}{r}{0.46\textwidth}
\vspace{-1.2em}
\small
\setlength{\tabcolsep}{3.5pt}
\caption{Comparison between FDM and CLD.}
\label{tab:CLD}
\centering
\begin{tabular}{lcccc}
\toprule
\textbf{Method} & \textbf{Params} & \textbf{Sampler} & \textbf{NFE} & \textbf{FID}   \\ \midrule
CLD             & 1076M  & RK45    & 302  & 2.29  \\
VP-FDM          & 557M   & RK45    & 113   &  2.03   \\ \midrule
CLD             & 1076M  & EDM     & 300  & 26.81 \\
CLD             & 1076M  & EDM     & 35   & 82.35 \\
VP-FDM          & 557M   & EDM     & 35   & 2.08  \\ \bottomrule
\end{tabular}
\end{wraptable}
\paragraph{Comparing to other momentum-based approach.}
We compared CLD~\citep{CLD} with our FDM on CIFAR-10  in Table~\ref{tab:CLD}. Here VP-FDM and CLD employ identical network architectures (\ie, DDPM++), time distribution (\ie, uniformly distributed $t$), formulation of step size $\beta_t$, and comparable training iterations.
The experimental results show that our VP-FDM achieves superior performance than CLD. Moreover, compared to our approach, CLD requires double the computational resources since it processes both data $\textbf{x}$ and velocity $\textbf{v}$, posing challenges to integrate it with SoTA DMs which often have large models. 
\section{Conclusion}
In this work, we accelerate  the diffusion process of diffusion models  through the lens of stochastic optimization. We first establish the connections between the diffusion process  and  the stochastic optimization process of  SGD. Then considering the faster convergence  of momentum SGD over  SGD, we  use  the momentum  in momentum SGD to accelerate  the diffusion process of diffusion models, and also show the theoretical acceleration.  Moreover,  we integrate  our improved diffusion processes  with  several diffusion models for acceleration, and  derive their score functions for practical usage.  Experimental results show our fast diffusion model   improves  VP~\citep{SGM}, VE~\citep{SGM}, and EDM~\citep{edm}  by accelerating their learning speed by at least $1.6 \times$ and also achieving much faster inference speed via largely reducing the sampling steps. 

\paragraph{Limitation.} 
Firstly, while the FDM is designed for use with general DMs, our verification is limited to three popular DMs. Secondly, here we only evaluate our method on several datasets which may not well explore the performance of our method, as we believe our FDM could play a significant role in reducing both training and sampling costs in various tasks.


{
    \small
    \bibliographystyle{unsrtnat}
    \bibliography{ref}
}

\newpage
\appendix

\begin{appendices}

\section{Experimental Results}
\label{sec:Appendix_A}

\subsection{Additional Implementation Details}
We implemented our Fast Diffusion Model and the corresponding baselines based on EDM~\citep{edm} codebase. We follow the default hyper-parameter settings for a fair comparison across all models. Specifically, for CIFAR-10, we use Adam optimizer with the learning rate $1 \times 10^{-3}$ and batch size $512$; for FFHQ and AFHQv2, we use learning rate $2 \times 10^{-4}$ and batch size $256$. 

Across all datasets, we adopt a learning rate ramp-up duration of 10 Mimgs, and we set the EMA half-life as 0.5 Mimgs. We initially set $\lambda_{\max} = 5$ and compute $\tau$ according to the training batch size to ensure that $\lambda_{\max} \cdot \tau^k$ reaches $500$ after the model has been trained with $10,000$ images. Thus, we set $\tau = 1.023$ on CIFAR-10 and $\tau=1.011$ on other datasets based on different batch size.

We ran all experiments using PyTorch 1.13.0, CUDA 11.7.1, and CuDNN 8.5.0 with 8 NVIDIA A100 GPUs.

\subsection{Additional Results}
\begin{figure}[!h]
    \centering
    \includegraphics[width=\textwidth]{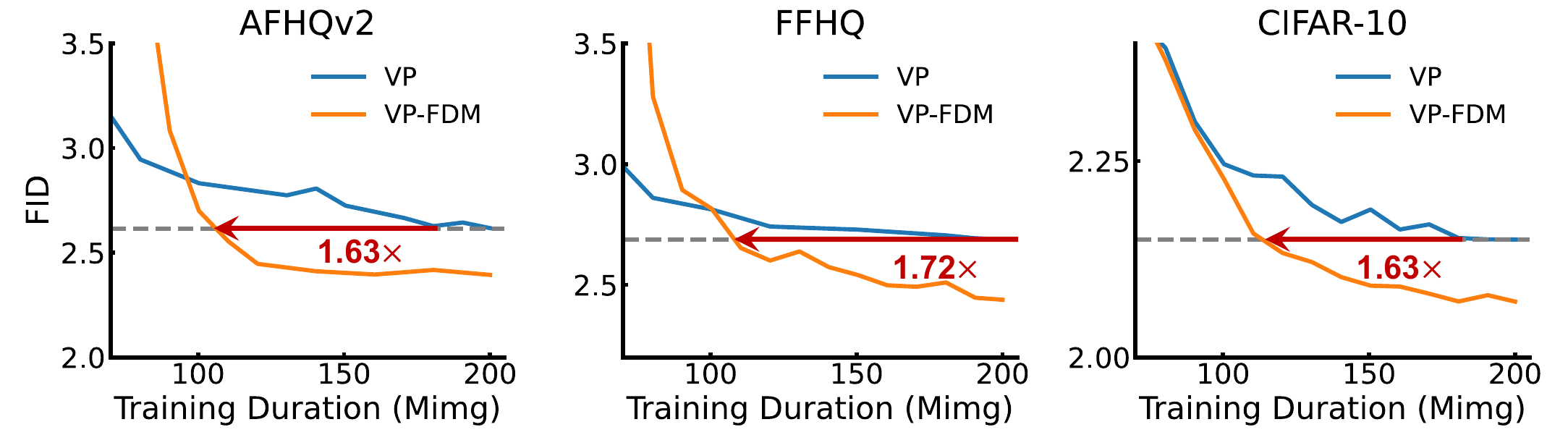}
    \caption{Training processes of VP and our VP-FDM. With momentum, VP-FDM achieves $1.66 \times$ training acceleration on average.}
    \label{fig:vp}
\end{figure}

\begin{figure}[!h]
    \centering
    \includegraphics[width=\textwidth]{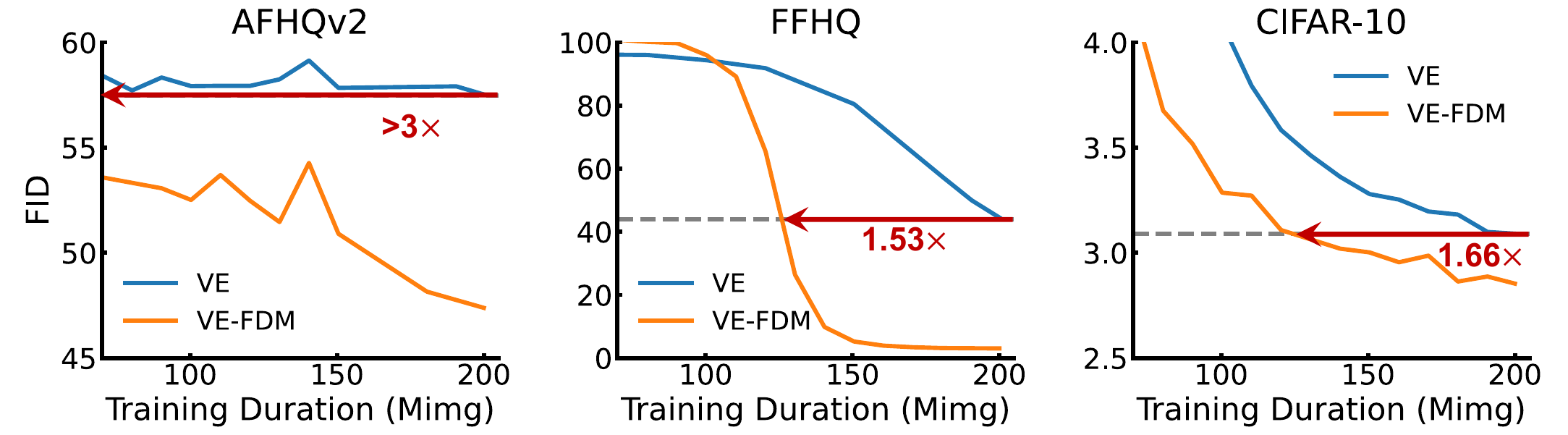}
    \caption{Training processes of VE and our VE-FDM. With momentum, VE-FDM achieves $2.06 \times$ training acceleration on average.}
    \label{fig:ve}
\end{figure}
\begin{figure}[!t]
	\centering
	\begin{subfigure}[b]{0.45\textwidth}
		\includegraphics[width=\textwidth]{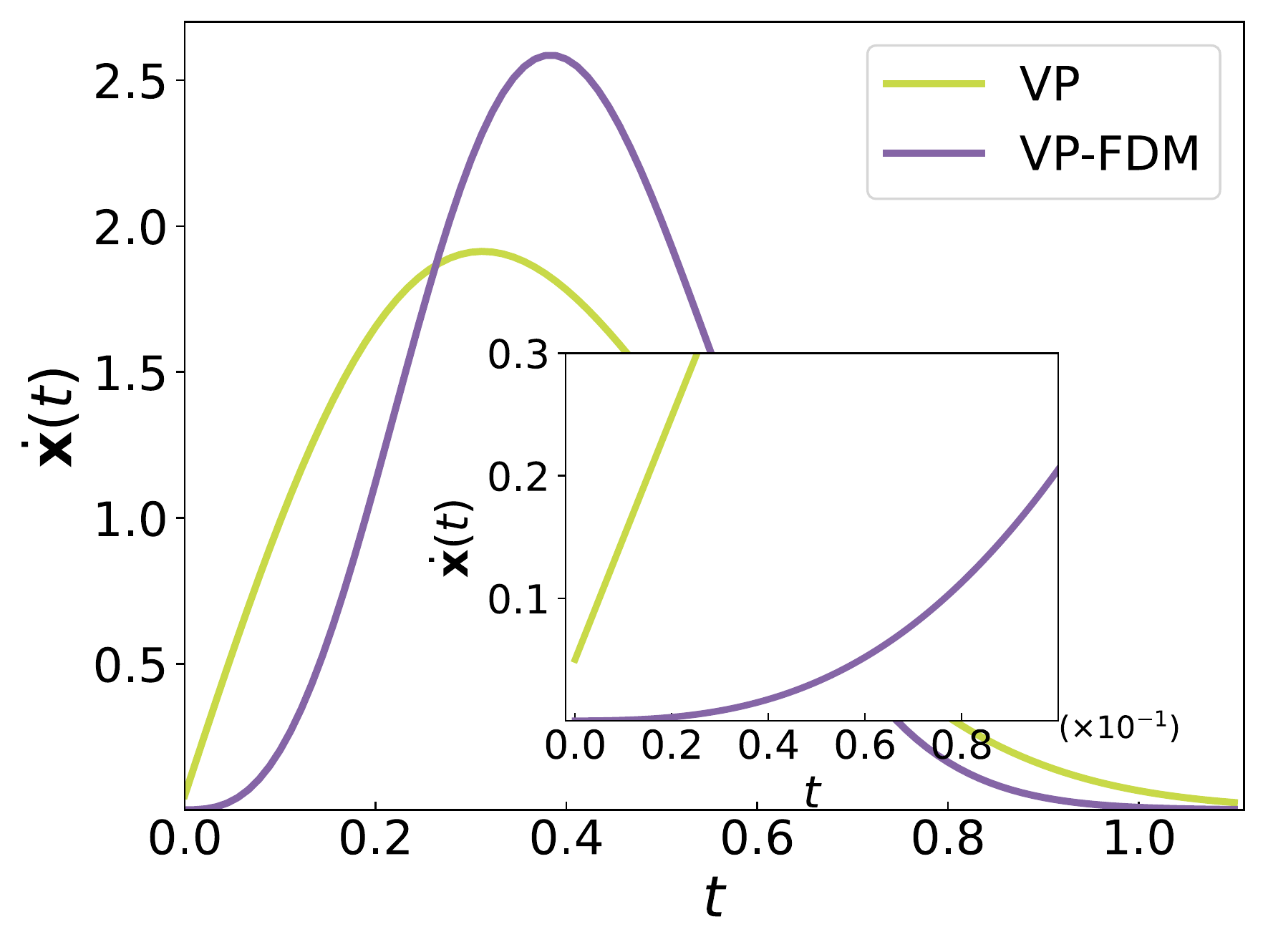}
        \vspace{-15pt}
		\caption{Velocity}
		\label{fig:velocity}
	\end{subfigure}
	\begin{subfigure}[b]{0.45\textwidth}
		\includegraphics[width=\textwidth]{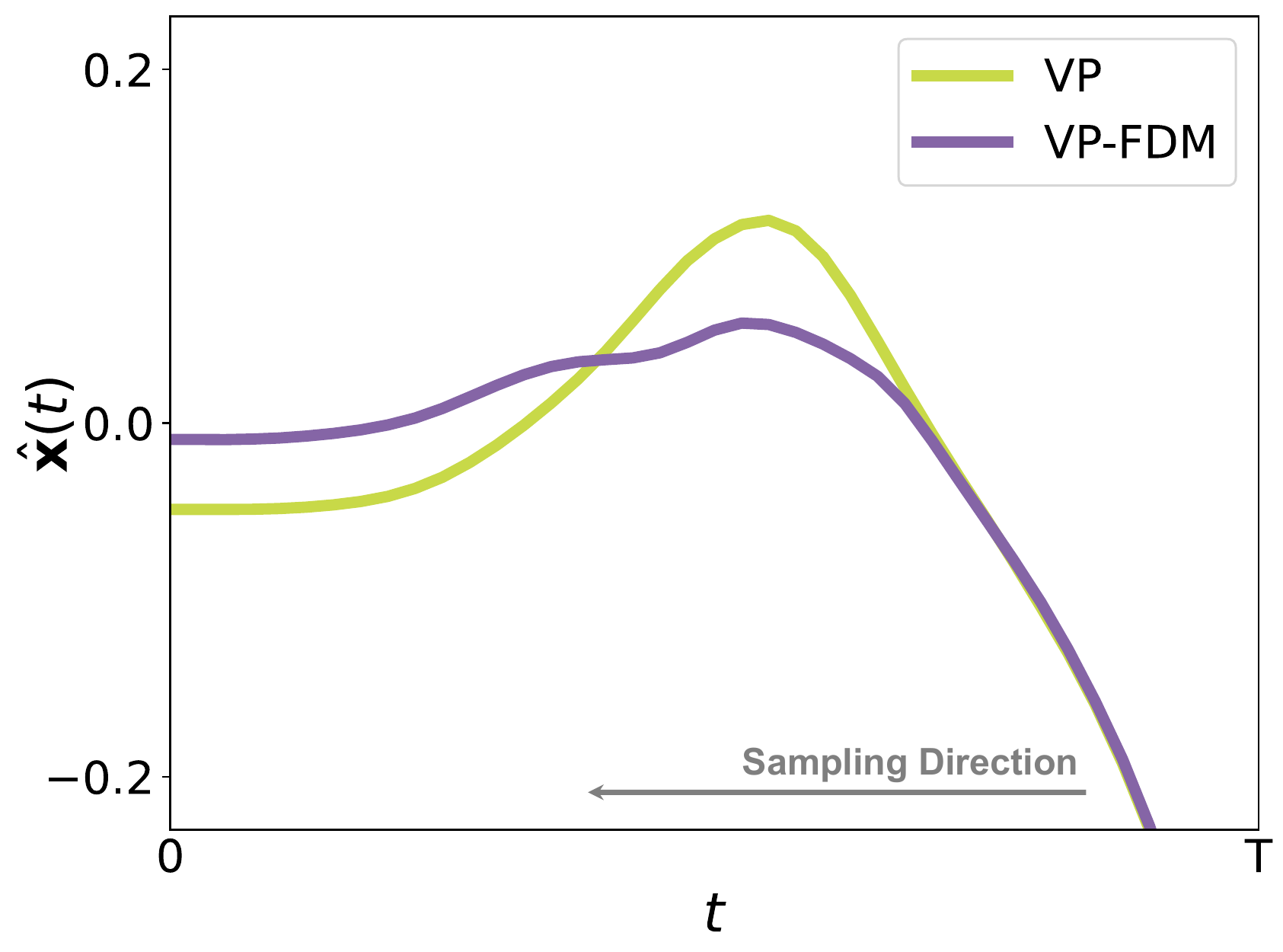}
        \vspace{-15pt}
		\caption{Sampling trajectory}
		\label{fig:trajectory}
	\end{subfigure}
 \vspace{-9pt}
    \caption{Comparison between VP and our VP-FDM. \textbf{(a)} The observed velocity of VP around $t=0$ fluctuates rapidly and fails to converge to $0$. \textbf{(b)} The trajectory of probability ODE of VP and VP-FDM at random pixel locations. The non-converging velocity of VP leads to an overshoot issue during sampling, while our VP-FDM successfully mitigates this issue.}
\vspace{-6pt}
\end{figure}
We further show more qualitative results of the models in the main paper. we provide the training process of VP and VE in Figure~\ref{fig:vp} and Figure~\ref{fig:ve}. Moreover, we compare the conditional CIFAR-10 in Figure~\ref{fig:cifar}, unconditional FFHQ and AFHQv2 in Figure~\ref{fig:ffhq} and Figure~\ref{fig:afhq}, respectively. 
These illustrations consistently justify our discussions in the main paper.

\begin{table}[!t]
\caption{Training time (hours) under different million training images (Mimg) on AFHQv2 dataset.}
\centering
\begin{tabular}{|c|cc|cc|cc|}
\hline
\textbf{Duration (Mimg)} & EDM & EDM-FDM & VP & VP-FDM & VE & VE-FDM \\ \hline\hline
50 & 19.2 & 19.3 & 19.3 & 19.4 & 19.5 & 19.6 \\
100 & 38.6 & 38.9 & 38.9 & 39.0 & 39.5 & 39.5 \\ 
150 & 58.1 & 58.3 & 58.5 & 58.6 & 59.3 & 59.3 \\ 
200 & 77.3 & 77.6 & 78.0 & 78.0 & 78.9 & 78.8 \\ 
\hline
\end{tabular}
\label{tab:time}
\vspace{-9pt}
\end{table}
\begin{figure}[!t]
    \centering
    \begin{subfigure}{0.4\textwidth}
    \centering
    \includegraphics[width=\textwidth]{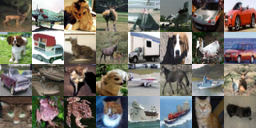}
    \caption{EDM \quad FID:1.88}
    \end{subfigure}\hspace{0.4cm}
    \begin{subfigure}{0.4\textwidth}
    \centering
    \includegraphics[width=\textwidth]{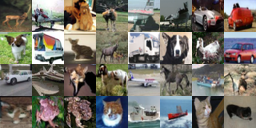}
    \caption{EDM-FDM \quad FID:1.79}
    \end{subfigure}
  
    \vspace{0.15cm}

    \begin{subfigure}{0.4\textwidth}
    \centering
    \includegraphics[width=\textwidth]{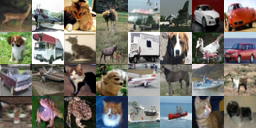}
    \caption{VP \quad FID:2.15}
    \end{subfigure}\hspace{0.4cm}
    \begin{subfigure}{0.4\textwidth}
    \centering
    \includegraphics[width=\textwidth]{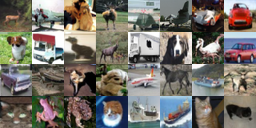}
    \caption{VP-FDM \quad FID:2.08}
    \end{subfigure}

    \vspace{0.15cm}

    \begin{subfigure}{0.4\textwidth}
    \centering
    \includegraphics[width=\textwidth]{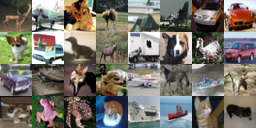}
    \caption{VE \quad FID:3.09}
    \end{subfigure}\hspace{0.4cm}
    \begin{subfigure}{0.4\textwidth}
    \centering
    \includegraphics[width=\textwidth]{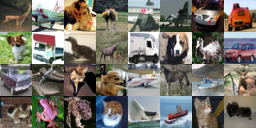}
    \caption{VE-FDM \quad FID:2.85}
    \end{subfigure}
\vspace{-9pt}
\caption{Results for different diffusion models of the same set of initial points ($\mathbf{x}_T$) on conditional CIFAR-10 with 35 NFEs.}
\label{fig:cifar}
\vspace{-9pt}
\end{figure}
\begin{figure}[!h]
    \centering
    \begin{subfigure}{0.4\textwidth}
    \centering
    \includegraphics[width=\textwidth]{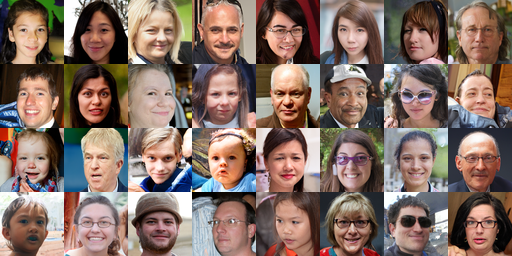}
    \caption{EDM \quad FID:2.65}
    \end{subfigure}\hspace{0.4cm}
    \begin{subfigure}{0.4\textwidth}
    \centering
    \includegraphics[width=\textwidth]{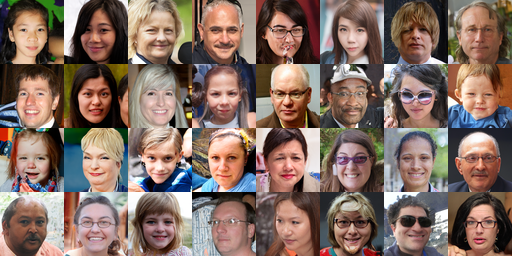}
    \caption{EDM-FDM \quad FID:2.59}
    \end{subfigure}
  
    \vspace{0.15cm}

    \begin{subfigure}{0.4\textwidth}
    \centering
    \includegraphics[width=\textwidth]{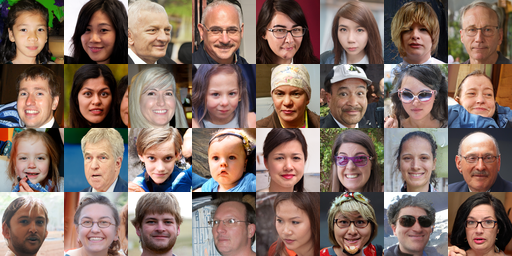}
    \caption{VP \quad FID:2.69}
    \end{subfigure}\hspace{0.4cm}
    \begin{subfigure}{0.4\textwidth}
    \centering
    \includegraphics[width=\textwidth]{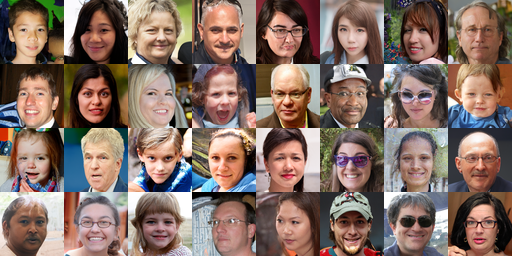}
    \caption{VP-FDM \quad FID:2.43}
    \end{subfigure}

    \vspace{0.15cm}

    \begin{subfigure}{0.4\textwidth}
    \centering
    \includegraphics[width=\textwidth]{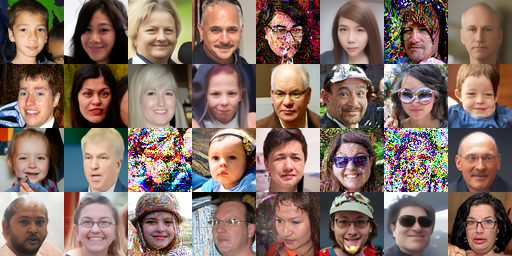}
    \caption{VE \quad FID:38.97}
    \end{subfigure}\hspace{0.4cm}
    \begin{subfigure}{0.4\textwidth}
    \centering
    \includegraphics[width=\textwidth]{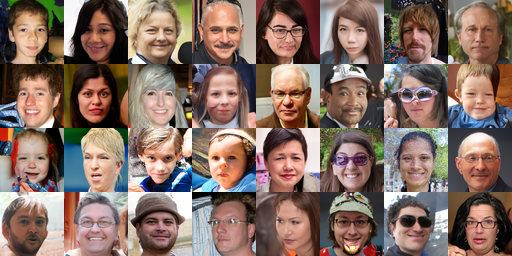}
    \caption{VE-FDM \quad FID:3.04}
    \end{subfigure}
\vspace{-9pt}
\caption{Results for different diffusion models of the same set of initial points ($\mathbf{x}_T$) on FFHQ with 79 NFEs.}
\label{fig:ffhq}
\vspace{-9pt}
\end{figure}
\begin{figure}[!h]
    \centering
    \begin{subfigure}{0.4\textwidth}
    \centering
    \includegraphics[width=\textwidth]{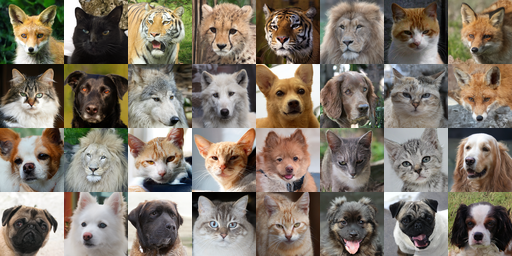}
    \caption{EDM \quad FID:2.37}
    \end{subfigure}\hspace{0.4cm}
    \begin{subfigure}{0.4\textwidth}
    \centering
    \includegraphics[width=\textwidth]{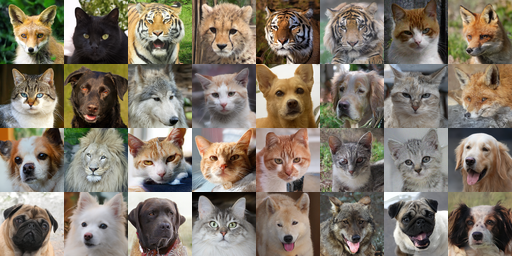}
    \caption{EDM-FDM \quad FID:1.93}
    \end{subfigure}
  
    \vspace{0.15cm}

    \begin{subfigure}{0.4\textwidth}
    \centering
    \includegraphics[width=\textwidth]{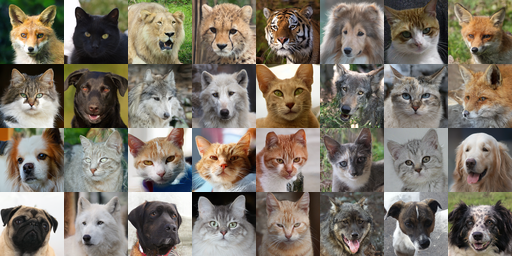}
    \caption{VP \quad FID:2.61}
    \end{subfigure}\hspace{0.4cm}
    \begin{subfigure}{0.4\textwidth}
    \centering
    \includegraphics[width=\textwidth]{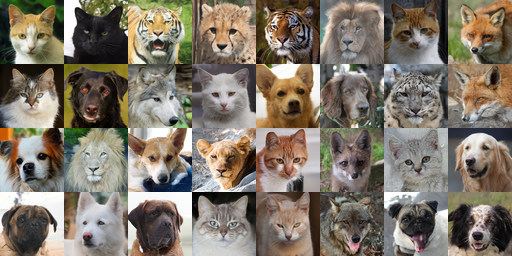}
    \caption{VP-FDM \quad FID:2.39}
    \end{subfigure}

    \vspace{0.15cm}

    \begin{subfigure}{0.4\textwidth}
    \centering
    \includegraphics[width=\textwidth]{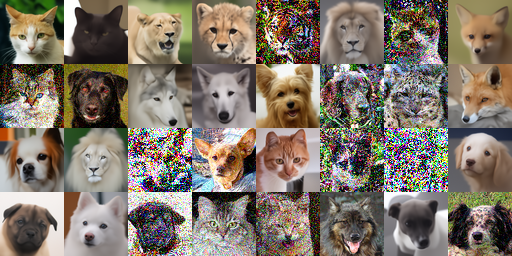}
    \caption{VE \quad FID:57.48}
    \end{subfigure}\hspace{0.4cm}
    \begin{subfigure}{0.4\textwidth}
    \centering
    \includegraphics[width=\textwidth]{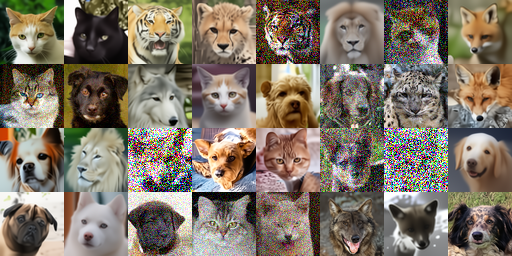}
    \caption{VE-FDM \quad FID:47.30}
    \end{subfigure}
\vspace{-9pt}
\caption{Results for different diffusion models of the same set of initial points ($\mathbf{x}_T$) on AFHQv2 with 79 NFEs.}
\label{fig:afhq}
\vspace{-9pt}
\end{figure}

We also conducted an analysis of the training time of our \ours, compared to other baseline DMs as shown in Table~\ref{tab:time}. The experimental results indicate that our method does not increase the training time per iteration.

\subsection{FDM Benefits the Sampling Process}
\label{subsec:Appendix_overshoot}
Additionally, we discuss the benefits of our \ours in the sampling process by taking the classical and effective VP~\citep{SGM} as an example. Specifically, the corresponding velocity $\dot{\mathbf{x}}(t)$ of VP and \ours can be written as 
\begin{equation}
    \dot{\mathbf{x}}_{\text{VP}}(t) = -\frac{1}{2}\beta(t)\exp(-\frac{1}{2}\mathcal{B}(t)) \mathbf{x}_0; \quad
    \dot{\mathbf{x}}_{\ours}(t) = -\beta(t)\mathcal{B}(t)\exp(-\mathcal{B}(t)) \mathbf{x}_0.
\end{equation}
By comparison,  one can observe that the boundary velocity in \ours approaches zero at both $t \to 0$ and $t \to \infty$, a property not shared by the VP model as depicted in Figure~\ref{fig:velocity}. The velocity of VP around $0$ not only fluctuates rapidly but also fails to converge to zero, thereby negatively affecting the image generation process. Specifically, as illustrated in Figure~\ref{fig:trajectory}, the image generation process of VP suffers from an overshoot issue where the predicted pixels surpass their target value before returning to the desired value. 
In contrast, our \ours effectively mitigates this issue. By applying an additional boundary condition on velocity, we ensure convergence.  Moreover, the smooth fluctuation of momentum around zero provides a more stable image generation process, which potentially allows for an increase in the sampling step size.

\section{Proof of Theorem 1}
\setcounter{theorem}{0}
\label{sec:Appendix_B}
\begin{theorem}
Suppose $\frac{\beta_t}{1 -\alpha_t} \leq \sigma\ (\forall\ t)$ holds in the function $f(\mathbf{x}) = \mathbb{E}_{\zeta \sim \mathcal{N}(0, b\mathbf{I})} \frac{1}{2} \lVert \mathbf{x} - \frac{\beta_t}{1 - \alpha_t} \zeta \rVert_2^2$. Define the learning rate as $\eta_t = 1 -\alpha_t$ and denote the initial error as $\Delta = \lVert \mathbf{x}_0 - \mathbf{x}^* \rVert_2$, where $\mathbf{x}_0$ is a starting point shared by momentum SGD or vanilla SGD, and $\mathbf{x}^* = 0$ is the optimal mean. Under these conditions, the momentum SGD satisfied $\|\mathbb{E}[\mathbf{x}_{k}-\mathbf{x}^*]\| \leq \prod_{j=0}^{k-1} (1 - \sqrt{\eta_j}) \Delta$, whereas the vanilla SGD satisfied $\|\mathbb{E}[\mathbf{x}_{k}-\mathbf{x}^*]\| \leq \prod_{j=0}^{k-1} (1 - {\eta_j}) \Delta$.
\end{theorem}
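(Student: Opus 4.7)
The plan is to reduce both bounds to deterministic linear recursions by taking expectations of the stochastic updates, exploiting the fact that $\mathbb{E}[\mathbf{g}_t \mid \mathbf{x}_t] = \mathbf{x}_t$. Indeed, since $\epsilon_t \sim \mathcal{N}(0,\mathbf{I})$ is independent of the past and mean-zero, the expression for $\mathbf{g}_t$ in \eqref{gradient} shows that the expected stochastic gradient equals $\mathbf{x}_t$, i.e., the true gradient of the quadratic mean at $\mathbf{x}_t$. Writing $\bar{\mathbf{x}}_t := \mathbb{E}[\mathbf{x}_t]$ and using $\mathbf{x}^* = 0$, the tower property reduces each update to a linear recursion that is scalar in every coordinate (since the Hessian is a scalar multiple of $\mathbf{I}$), so I can work coordinate-wise and then take norms.

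For vanilla SGD, taking expectations on both sides of $\mathbf{x}_{t+1} = \mathbf{x}_t - \eta_t \mathbf{g}_t$ immediately gives $\bar{\mathbf{x}}_{t+1} = (1-\eta_t)\bar{\mathbf{x}}_t$. A straightforward induction from $\bar{\mathbf{x}}_0 = \mathbf{x}_0$ produces $\bar{\mathbf{x}}_k = \prod_{j=0}^{k-1}(1-\eta_j)\,\mathbf{x}_0$, and taking norms delivers the stated bound with equality.

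For momentum SGD, the same expectation applied to \eqref{eq:hbdisc} yields the second-order scalar recurrence $\bar{\mathbf{x}}_{t+1} = (1 + \gamma - \eta_t)\bar{\mathbf{x}}_t - \gamma\,\bar{\mathbf{x}}_{t-1}$. I would write this as a $2\times 2$ matrix iteration on the state $(\bar{\mathbf{x}}_t,\bar{\mathbf{x}}_{t-1})$, whose transfer matrix $M_t$ has characteristic polynomial $\lambda^2 - (1+\gamma-\eta_t)\lambda + \gamma = 0$. I would then pursue one of two equivalent routes: (i) factor the recurrence as $\bar{\mathbf{x}}_{t+1} - \lambda_t\bar{\mathbf{x}}_t = \mu_t\bigl(\bar{\mathbf{x}}_t - \lambda_t\bar{\mathbf{x}}_{t-1}\bigr)$ with $\lambda_t = 1-\sqrt{\eta_t}$ and $\mu_t\lambda_t = \gamma$, or (ii) build an energy-type Lyapunov function $V_t = a\|\bar{\mathbf{x}}_t\|^2 + b\|\bar{\mathbf{x}}_t - \bar{\mathbf{x}}_{t-1}\|^2$ whose coefficients are tuned so that $V_{t+1} \leq (1-\sqrt{\eta_t})^2 V_t$ holds step by step. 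Iterating this contraction, taking a square root, and using the natural initialization $\bar{\mathbf{x}}_{-1} = \mathbf{x}_0$ (so $V_0 = \Delta^2$), then yields the $\prod_j(1-\sqrt{\eta_j})$ bound.

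The main obstacle is reconciling a \emph{constant} momentum parameter $\gamma$ with a \emph{time-varying} $\eta_t$: the clean critical-damping factorization requires $\gamma = (1-\sqrt{\eta_t})^2$, which depends on $t$, so the eigenvalues of $M_t$ are not identically $1-\sqrt{\eta_t}$ for every $t$ under a single $\gamma$. I expect the resolution is either to read $\gamma$ in \eqref{eq:hbdisc} as step-dependent ($\gamma = \gamma_t$), or to use the hypothesis $\beta_t/(1-\alpha_t) \leq \sigma$ together with $\eta_t = 1-\alpha_t$ to bound $\eta_t$ uniformly and then pick a worst-case $\gamma$ once and for all, absorbing the slack into the coefficients $a, b$ of the Lyapunov function so the per-step contraction factor $(1-\sqrt{\eta_t})^2$ is preserved. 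The delicate quadratic algebra of certifying $V_{t+1} \leq (1-\sqrt{\eta_t})^2 V_t$ uniformly in $t$ is where most of the real work lies; the vanilla-SGD half, by contrast, requires no such contortion and falls out immediately from its first-order recursion.
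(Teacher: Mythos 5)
Your proposal follows essentially the same route as the paper's proof: take expectations so the mean-zero noise drops out, unroll the first-order recursion for vanilla SGD, and recast momentum SGD as a $2\times 2$ companion-matrix iteration whose critical-damping choice $\gamma=(1-\sqrt{\eta_k})^2$ forces the eigenvalue $1-\sqrt{\eta_k}$, then multiply the per-step contraction factors. The constant-$\gamma$ versus time-varying-$\eta_t$ tension you flag is resolved in the paper exactly by your option (i) --- $\gamma$ is silently taken step-dependent as $(1-\sqrt{\eta_k})^2$ --- after which the paper simply asserts $\lVert A_k\rVert\leq\lambda_{\max}(A_k)=1-\sqrt{\eta_k}$ (a step that glosses over the fact that the operator norm of this defective, non-normal matrix exceeds its spectral radius), so the ``delicate quadratic algebra'' you anticipate having to do is not actually carried out there either.
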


\begin{proof}
\textbf{Vanilla SGD:}
Consider $\mathbf{x} \in \mathbb{R}[-1, 1]^d$.
Let $\sigma_t =\frac{\beta_t}{1-\alpha_t}=\sqrt{\frac{2 - \eta_t}{\eta_t}}$. 

For vanilla SGD, we have: 
\begin{equation}
    \bbx_{k+1} = \bbx_k - \eta_k (\bbx_k - \sigma_k \epsilon_k),
\end{equation}
where $\epsilon_k = \frac{1}{b} \sum\nolimits_{i=1}^b \zeta_i$ denotes the mean of a sampled minibatch $\{\zeta_i\}_{i=1}^b$, thus satisfied $\epsilon_k \sim \mathcal{N}(0, \mathbf{I})$.
Hence, for the optimum mean $\bbx^* = 0$, we have
\begin{align}
\bbx_{k+1} - \bbx^* 
&= \bbx_k - \bbx^* - \eta_k (\bbx_k - \bbx^*) + \eta_k \sigma \epsilon_k \\
&= \prod_{j=0}^k (1 - \eta_j) (\bbx_0 - \bbx^*) + \sum_{j=0}^k \prod_{i=1}^{k-j} (1 - \eta_i) \eta_j \sigma_j \epsilon_j.
\end{align}

Denote the initial error as $\Delta = \lVert \bbx_0 - \bbx^* \rVert$, we derive the norm of expectation error:
\begin{align}
    \|\mathbb{E}[\mathbf{x}_{k+1}-\mathbf{x}^*]\|
    & \leq \left\| \prod_{j=0}^k (1 - \eta_j) (\bbx_0 - \bbx^*) + \mathbb{E}\left[\sum_{j=0}^k \prod_{i=1}^{k-j} (1 - \eta_i) \eta_j \sigma_j \epsilon_j \right] \right\| \\
    & \leq \prod_{j=0}^k (1 - \eta_j) \|\bbx_0 - \bbx^*\| + \sigma \sum_{j=0}^k \prod_{i=1}^{k-j} (1 - \eta_i) \eta_j \|\mathbb{E} [\epsilon_j]\|\\
    & \leq \prod_{j=0}^k (1 - \eta_j) \|\Delta\|
\end{align}

\textbf{Momentum SGD:}
For momentum SGD, we have 
\begin{equation}
    \bbx_{k+1} = \bbx_k - \eta_k (\bbx_k - \sigma_k \epsilon_k) + \gamma (\bbx_k - \bbx_{k-1}).
\end{equation}
Hence, the for the optimum mean $\bbx^* = 0$, we have
\begin{equation}
    \bbx_{k+1} - \bbx^* = (1 - \eta_k + \gamma)(\bbx_k - \bbx^*) - \gamma(\bbx_{k-1} - \bbx^*) + \eta_k \sigma_k \epsilon_k.
\end{equation}
Let 
$
A_k = 
\begin{bmatrix}
    (1 - \eta_k + \gamma)\mathbf{I} & -\gamma \mathbf{I} \\
    \mathbf{I} & \mathbf{0}
\end{bmatrix}
$. Then the above equation can be rewritten into a matrix form:
\begin{align}
    \begin{bmatrix}
        \bbx_{k+1} - \bbx^* \\
        \bbx_{k} - \bbx^* \\
    \end{bmatrix}
    &= A_k
    \begin{bmatrix}
        \bbx_{k} - \bbx^* \\
        \bbx_{k-1} - \bbx^* \\
    \end{bmatrix}
    + \eta_k \sigma_k
    \begin{bmatrix}
        \epsilon_k \\
        \mathbf{0}
    \end{bmatrix}
    \\
    &= \prod_{j=0}^k A_j
    \begin{bmatrix}
        \bbx_1 - \bbx^* \\
        \bbx_0 - \bbx^*
    \end{bmatrix}
    + \sum_{j=0}^{k-1}
    \prod_{i=1}^{k-j} A_i
        \begin{bmatrix}
        \mathbf{1} \\
        \mathbf{0}
    \end{bmatrix}
    \sigma_j \eta_j \epsilon_j.
\end{align}
Let $\gamma = (1 - \sqrt{\eta_{k}})^2$, we have $\lambda_{\max}(A_k) = 1 - \sqrt{\eta_k}$, hence $\lVert A_k \rVert \leq 1 - \sqrt{\eta_k} < 1$. Denote the initial error as $\Delta = 
\begin{Vmatrix}
    \bbx_1 - \bbx^* \\
    \bbx_0 - \bbx^*
\end{Vmatrix}
$, we derive the norm of expectation error:
\begin{align}
    \|\mathbb{E}[\mathbf{x}_{k+1}-\mathbf{x}^*]\|
    & \leq \left\| \prod_{j=0}^k A_j
    \begin{bmatrix}
        \bbx_1 - \bbx^* \\
        \bbx_0 - \bbx^*
    \end{bmatrix}
    + \mathbb{E} \left[\sum_{j=0}^{k-1}
    \prod_{i=1}^{k-j} A_i
    \begin{bmatrix}
        \mathbf{1} \\
        \mathbf{0}
    \end{bmatrix}
    \sigma_j \eta_j \epsilon_j
    \right]\right\| \\
    & \leq \prod_{j=0}^k (1 - \sqrt{\eta_j}) \|\Delta\| + \sigma \sum_{j=0}^{k-1} \left\|\prod_{i=1}^{k-j} A_i
    \begin{bmatrix}
        \mathbf{1} \\
        \mathbf{0}
    \end{bmatrix} \right\|
    \|\mathbb{E}[\eta_j  \epsilon_j]\| \\
    & \leq \prod_{j=0}^k (1 - \sqrt{\eta_j}) \|\Delta\|
\end{align}

When comparing the expectation error $\|\mathbb{E}[\mathbf{x}_{k}-\mathbf{x}^*]\|$ at $k$-th iteration, we prove that momentum SGD converges faster than vanilla SGD \wrt the expectation.
\end{proof}
\section{Proof of Theorem 2}
\label{sec:Appendix_C} 
\begin{theorem}
For any  $\delta \in (0,4\sqrt{1-\sqrt{\alpha}})$ with $\alpha=1-\beta$, if $\gamma=2-2\sqrt{1-\sqrt{\alpha}}-\sqrt{\alpha}+\delta$, then
\begin{align} 
\bbx_{T}&=\zeta_T \bbx_0+\kappa_{T} \epsilon +\gamma\sum\nolimits_{t=2}^{T-1}   \alpha^{\frac{T-t-1}{2}} (\sigma_t - \sigma_{t-1}), & (\text{momentum-based diffusion process}) \\
 \Tilde{\mathbf{x}}_{T}&= \sqrt{\alpha^{T}} \mathbf{x}_0+ \sqrt{1 -\alpha^{T}}\epsilon, & (\text{vanilla diffusion process})
 \end{align} 
where $\epsilon \sim \mathcal{N}(0,\mathbf{I})$, $\zeta_T=O(\sqrt{\gamma^T})$, $\kappa_T=\sqrt{1 -\alpha^{T}+ \gamma^2 \alpha^{T-2}(1-\alpha)}$, $\mathbf{x}_0$ is a starting point shared by both $\mathbf{x}_{T}$ and $\Tilde{\mathbf{x}}_{T}$.
\end{theorem}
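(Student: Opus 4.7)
}
The plan is to exploit the linearity of both recursions in $\bbx_t$. Writing $\bbx_t = a_t \bbx_0 + \sigma_t$, where $\sigma_t$ is the noise-only process already defined in the theorem (with $\sigma_0 = 0$, $\sigma_1 = \sqrt{\beta}\epsilon_0$), the coefficient sequence $a_t$ must satisfy the homogeneous second-order recurrence $a_{t+1} = (\sqrt{\alpha}+\gamma)a_t - \gamma a_{t-1}$ with $a_0 = 1$ and $a_1 = \sqrt{\alpha}$. The claim then splits into two independent sub-problems: analyzing the deterministic factor $\zeta_T := a_T$ and expanding the stochastic factor $\sigma_T$ into a single Gaussian plus a tail of momentum-difference terms. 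The vanilla case is disposed of immediately by a one-line induction on $\tilde{\bbx}_{t+1} = \sqrt{\alpha}\,\tilde{\bbx}_t + \sqrt{\beta}\epsilon_t$, which yields $\tilde{\bbx}_T = \sqrt{\alpha^T}\bbx_0 + \sum_{t=0}^{T-1}\alpha^{(T-1-t)/2}\sqrt{\beta}\epsilon_t$, and the noise sum has variance $\beta \sum_{k=0}^{T-1}\alpha^k = 1-\alpha^T$ by independence.

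For the deterministic factor, I would solve the characteristic polynomial $r^2 - (\sqrt{\alpha}+\gamma)r + \gamma = 0$. Its discriminant is a quadratic $D(\gamma) = \gamma^2 - 2(2-\sqrt{\alpha})\gamma + \alpha$ in $\gamma$, whose roots are $\gamma_{\pm} = (2-\sqrt{\alpha}) \pm 2\sqrt{1-\sqrt{\alpha}}$. The hypothesis $\gamma = 2 - 2\sqrt{1-\sqrt{\alpha}} - \sqrt{\alpha} + \delta = \gamma_- + \delta$ with $\delta \in (0, 4\sqrt{1-\sqrt{\alpha}})$ places $\gamma$ strictly between $\gamma_-$ and $\gamma_+$, so $D(\gamma) < 0$ and the roots are complex conjugates $r_{\pm} = \sqrt{\gamma}\,e^{\pm i\theta}$ (since $r_+ r_- = \gamma$ gives modulus $\sqrt{\gamma}$). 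Hence $a_T = \gamma^{T/2}(C\cos(T\theta) + D\sin(T\theta))$ for constants $C, D$ determined solely by $a_0, a_1$, so $|\zeta_T| \leq \sqrt{C^2+D^2}\cdot \gamma^{T/2} = O(\sqrt{\gamma^T})$ uniformly in $T$.

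For the stochastic factor, I would treat the momentum difference as a forcing term and unroll the first-order part of the recursion, yielding
\begin{equation*}
\sigma_T \;=\; \sum_{t=0}^{T-1}\alpha^{(T-1-t)/2}\sqrt{\beta}\,\epsilon_t \;+\; \gamma\sum_{t=1}^{T-1}\alpha^{(T-1-t)/2}(\sigma_t - \sigma_{t-1}).
\end{equation*}
Peeling off the $t=1$ term of the second sum, which equals $\gamma\alpha^{(T-2)/2}\sqrt{\beta}\,\epsilon_0$, I would combine it with the first (fully independent) Gaussian sum. Because all the $\epsilon_t$ are independent, the merged object is itself Gaussian with variance obtained by adding squared coefficients, producing exactly a term of the form $\kappa_T \epsilon$ with $\epsilon \sim \mathcal{N}(0,\mathbf{I})$. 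The remainder $\gamma\sum_{t=2}^{T-1}\alpha^{(T-1-t)/2}(\sigma_t - \sigma_{t-1})$ is exactly the residual stated in the theorem. Assembling these pieces gives $\bbx_T = \zeta_T \bbx_0 + \kappa_T \epsilon + \gamma\sum_{t=2}^{T-1}\alpha^{(T-t-1)/2}(\sigma_t - \sigma_{t-1})$.

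The main obstacle I anticipate is the bookkeeping in the last step: carefully tracking which noise contributions are absorbed into $\kappa_T \epsilon$ versus which are left implicit inside the $\sigma_t - \sigma_{t-1}$ terms, and verifying that the collapsed Gaussian has exactly the advertised variance. A secondary (milder) challenge is confirming $\zeta_T \leq \sqrt{\alpha^T}$ in the regime of interest, which is needed to justify the faster-convergence interpretation; this follows from $\gamma$ being small enough that $\sqrt{\gamma} < \sqrt{\alpha}$ but deserves a dedicated calculation using $\gamma \approx \gamma_- = (2-\sqrt{\alpha}) - 2\sqrt{1-\sqrt{\alpha}}$, which one can check is less than $\alpha$ for $\alpha$ close to $1$ (the regime relevant to diffusion models where each step adds only a tiny amount of noise).
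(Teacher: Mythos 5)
Your proposal follows essentially the same route as the paper's proof: a deterministic/stochastic split of the linear recursion, complex-conjugate characteristic roots of modulus $\sqrt{\gamma}$ under the stated choice of $\gamma$ (your observation that $\gamma=\gamma_-+\delta$ lies strictly between the discriminant's roots $\gamma_\pm=(2-\sqrt{\alpha})\pm 2\sqrt{1-\sqrt{\alpha}}$ is a slightly cleaner way to verify the paper's chain of inequalities), and the same unrolling of the noise recursion with the momentum difference treated as a forcing term, the $t=1$ term peeled off, and the independent Gaussians collapsed into $\kappa_T\epsilon$. The only cosmetic difference is that you track the scalar coefficient $a_t$ through its second-order recurrence where the paper powers the $2\times 2$ companion matrix $M$ and bounds $\lVert M^t\rVert=O(\sqrt{\gamma^t})$; the mathematical content is identical.
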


\begin{proof}
From the definition, 
\begin{align*}
\bbx_{t+1} &=  \sqrt{1-\beta}\bbx_{t} +   \sqrt{\beta}\epsilon_t + \gamma(\bbx_{t} - \bbx_{t-1})
\\ & = (  \sqrt{1-\beta}+\gamma) \bbx_{t} - \gamma \bbx_{t-1}+    \sqrt{\beta}\epsilon_t
\end{align*}
Since there is no interaction among different coordinates in this system, we can write this equivalently by 
\begin{align*}
(\bbx_{t+1})_i = (  \sqrt{1-\beta}+\gamma)( \bbx_{t})_i - \gamma (\bbx_{t-1})_{i}+    \sqrt{\beta}(\epsilon_t)_{i}
\end{align*}
and analyze the evolution of each coordinate $i=1,\dots,d$ separately. Let $i \in \{1,2,\dots,d\}$ and define $x_t=( \bbx_{t})_i$ and $\varepsilon_{t}=(\epsilon_{t})_i$. Then, since $x_{t+1} = (  \sqrt{1-\beta}+\gamma)x_{t} - \gamma x_{t-1}+    \sqrt{\beta}\varepsilon_t$, we have
\begin{align*}
\begin{bmatrix}x_{t+1} \\
x_t \\
\end{bmatrix} &=M_{} \begin{bmatrix}x_{t} \\
x_{t-1} \\
\end{bmatrix} +    \sqrt{\beta} \varepsilon_t \begin{bmatrix}1 \\
0 \\
\end{bmatrix}
\\ & =M \left(M_{}\begin{bmatrix}x_{t-1} \\
x_{t-2} \\
\end{bmatrix} +    \sqrt{\beta_{}} \varepsilon_{t-1} \begin{bmatrix}1 \\
0 \\
\end{bmatrix}\right)+    \sqrt{\beta} \varepsilon_t \begin{bmatrix}1 \\
0 \\
\end{bmatrix}
\end{align*}
where
$$
M = \begin{bmatrix}  \sqrt{1-\beta}+\gamma & -\gamma \\
1 & 0 \\
\end{bmatrix}. 
$$
Then by induction, 
\begin{align} 
\begin{bmatrix}x_{t+1} \\
x_t \\
\end{bmatrix} = M^t \begin{bmatrix}x_{1} \\
x_{0} \\
\end{bmatrix} +  \sum_{k=1}^t  \sqrt{\beta} \varepsilon_k M^{t-k} \begin{bmatrix}1 \\
0 \\
\end{bmatrix},
\end{align} 
Since $x_{1}=  \sqrt{\alpha}x_{0} +   \sqrt{\beta}\varepsilon_0$, 
\begin{align} \label{eq:1}
\begin{bmatrix}x_{t+1} \\
x_t \\
\end{bmatrix} = M^t \begin{bmatrix}  \sqrt{\alpha}x_{0} \\
x_{0} \\
\end{bmatrix} +   \sqrt{\beta}\varepsilon_0M^t \begin{bmatrix}  1 \\
0 \\
\end{bmatrix} +  \sum_{k=1}^t  \sqrt{\beta} \varepsilon_k M^{t-k} \begin{bmatrix}1 \\
0 \\
\end{bmatrix},
\end{align}
By solving the   characteristic polynomial of $M$, the eigenvalues $\lambda_1,\lambda_2$ of $M$ are 
\begin{align*}
&\lambda_1=\frac{1}{2} \left(  \sqrt{1-\beta}+\gamma -  \sqrt{(  \sqrt{1-\beta}+\gamma)^2 - 4\gamma } \right)
\\ & \lambda_2=\frac{1}{2} \left(  \sqrt{1-\beta}+\gamma +  \sqrt{(  \sqrt{1-\beta}+\gamma)^2 - 4\gamma } \right).
\end{align*}
Thus, if $(  \sqrt{1-\beta}+\gamma)^2 - 4\gamma<0$, then the eigenvalues are complex  and complex conjugates of each other with the same absolute value, which implies that:
$$
|\lambda_1|^2 = |\lambda_2|^2 = \lambda_2 ^{*}\lambda_2 =\lambda_1 ^{}\lambda_2 = \det(M_t)=\gamma.
$$   
where the last line follows from the fact that the product of eigenvalues of a matrix are the determinant of the matrix. We now show that the condition  $(  \sqrt{1-\beta}+\gamma)^2 - 4\gamma<0$ is satisfied by our choice of $\gamma=2-2\sqrt{1-c}-c+\delta$ with $\delta \in (0,4\sqrt{1-c})$ where $c=  \sqrt{1-\beta}$: assuming that $\delta>0$, 
\begin{align*}
&(\sqrt{1-\beta}+\gamma)^2 - 4\gamma<0
\\ \Longleftrightarrow & (c+2-2\sqrt{1-c}-c+\delta)^2 < 4(2-2\sqrt{1-c}-c+\delta)~
\\ \Longleftrightarrow & 4+4(1-c)-8\sqrt{1-c}+\delta^{2}+2\delta(2-2\sqrt{1-c})<8-8\sqrt{1-c}-4c+4\delta~
\\ \Longleftrightarrow & \delta^{2}+2\delta(2-2\sqrt{1-c})<4\delta~
\\ \Longleftrightarrow & \delta<4\sqrt{1-c}.
\end{align*}
Thus, the condition  $(  \sqrt{1-\beta}+\gamma)^2 - 4\gamma<0$ is satisfied, implying that $|\lambda_1|^2 = |\lambda_2|^2 =\gamma$. We 
write its eigendecomposition by $M= Q \Lambda Q^{-1}$, with which
$$
\|M^t\| = \|Q \Lambda^t Q^{-1}\| =O(\sqrt{\gamma^t}) . 
$$
For the noise part, we denote the accumulated total noise at $t$ of the coordinate $i$ by $J_t=(\sigma_t)_i$: 
\begin{align*}
 \left(    \sqrt{\beta}\varepsilon_0M^t \begin{bmatrix}  1 \\
0 \\
\end{bmatrix} +\sum_{k=1}^t  \sqrt{\beta} \varepsilon_k M^{t-k)} \begin{bmatrix}1 \\
0 \\
\end{bmatrix} \right)_1 =J_{t+1} &=  \sqrt{1-\beta}J_t +   \sqrt{\beta}\varepsilon_t + \gamma(J_t -J_{t-1})
\\ & =  \sqrt{1-\beta} (  \sqrt{1-\beta} J_{t-1}+\varphi_{t-1} ) +\varphi_t \\ & =  \sqrt{\alpha^t}J_1+ \sum_{k=1}^t      \left((1-\beta)^{\frac{t-k}{2}}\right)\varphi_k
\\ & =  \sqrt{\alpha^t}\sqrt{1-\alpha}\varepsilon_0+ \sum_{k=1}^t      \left((1-\beta)^{\frac{t-k}{2}}\right)\varphi_k 
\end{align*}
where $\varphi_t =  \sqrt{\beta}\varepsilon_t + \gamma(J_t -J_{t-1})$. By expanding this $\varphi_t$,
$$
J_{t+1} =  \sqrt{\alpha^t}\sqrt{1-\alpha}\varepsilon_0+ \sum_{k=1}^t     \left((1-\beta)^{\frac{t-k}{2}}\right)  \sqrt{\beta}\varepsilon_k +\sum_{k=1}^t      \left((1-\beta)^{\frac{t-k}{2}}\right)\gamma(J_k -J_{k-1}).  
$$
For the second term, with $\alpha=1-\beta$, we use the property of Gaussian random variable as
\begin{align*}
  \sqrt{\alpha^t}\sqrt{1-\alpha}\varepsilon_0+\sum_{k=1}^t      \left(\alpha^{\frac{t-k}{2}}\right)  \sqrt{1-\alpha}\varepsilon_k =\sum_{k=0}^t      \tilde \varepsilon_k, 
\end{align*}
where $\tilde \varepsilon_k \sim \mathcal{N}(0,\sigma^2_{k})$ and 
$$
\sigma^2_{k} =      \left( \alpha^{\frac{t-k}{2}}\right)^2 (\sqrt{1-\alpha})^2=\alpha^{t-k} (1-\alpha)=\alpha^{t-k} -\alpha^{t-k+1}.
$$
Since the sum of Gaussian is Gaussian with sum of their variances,
$$
\sum_{k=0}^t      \tilde \varepsilon = \hat\varepsilon,
$$
where $ \hat\varepsilon\sim \mathcal{N}(0,\sigma^2)$ and
$$
\sigma^2= \sum_{k=0}^{t}\left(      \alpha^{t-k} -\alpha^{t-k+1} \right)=1 -\alpha^{t+1}.
$$ 
Since $\hat\varepsilon=\sqrt{1 -a^{t+1}}\varepsilon$ with  $\check\varepsilon\sim \mathcal{N}(0,1)$,  
$$
J_{t+1} =\sqrt{1 -\alpha^{t+1}}\check\varepsilon +\sum_{k=1}^t      \left((1-\beta)^{\frac{t-k}{2}}\right)\gamma(J_k -J_{k-1}).
$$
Combining with \eqref{eq:1}, 
\begin{align*} 
x_{t+1}=\begin{bmatrix}x_{t+1} \\
x_t \\
\end{bmatrix}_1 &=\left( M^t \begin{bmatrix}\sqrt{\alpha}x_{0} \\
x_{0} \\
\end{bmatrix} +   \sqrt{\beta}\varepsilon_0M^t \begin{bmatrix}  1 \\
0 \\
\end{bmatrix} +  \sum_{k=1}^t  \sqrt{\beta} \varepsilon_k \left(M^{t-k}\right) \begin{bmatrix}1 \\
0 \\
\end{bmatrix}\right)_1
\\ & =\left(M^t \begin{bmatrix}\sqrt{\alpha}x_{0} \\
x_{0} \\
\end{bmatrix}  \right)_1 +\sqrt{1 -\alpha^{t+1}}\check\varepsilon +\sum_{k=1}^t      \left((1-\beta)^{\frac{t-k}{2}}\right)\gamma(J_k -J_{k-1}).
\end{align*}
This implies that 
$$
x_{T}= \zeta_T x_0+ \sqrt{1 -\alpha^{T}}\check\varepsilon +\sum_{t=1}^{T-1}   (1-\beta)^{\frac{T-t-1}{2}}  \gamma(J_t -J_{t-1}), 
$$
where $\zeta_t=O(\sqrt{\gamma^t})$. Moreover, since $J_1 -J_{0}= \sqrt{1-\alpha}\varepsilon_{0}$,
\begin{align*}
 \sqrt{1 -\alpha^{T}}\check\varepsilon +   (1-\beta)^{\frac{T-1-1}{2}}  \gamma(J_1 -J_{0})&=\sqrt{1 -\alpha^{T}} \check\varepsilon +   \sqrt{\alpha^{T-2}}\gamma\sqrt{1-\alpha}\varepsilon_{0}
\\ & =\sqrt{1 -\alpha^{T}+ \gamma^2 \alpha^{T-2}(1-\alpha)} \varepsilon 
\end{align*}
where the last line follows from the fact that the sum of two Gaussian is  Gaussian with the sum of their variances. Thus,
 $$
 x_{T}= \zeta_T x_0+ \sqrt{1 -\alpha^{T}+ \gamma^2 \alpha^{T-2}(1-\alpha)}\varepsilon +\sum_{t=2}^{T-1}   (1-\beta)^{\frac{T-t-1}{2}}  \gamma(J_t -J_{t-1}),
 $$
 
 By recalling the definition of   $x_t=( \bbx_{t})_i$, $\varepsilon_{t}=(\epsilon_{t})_i$, and  $J_t=(E_t)_i$, since $i \in \{1,2,\dots,d\}$ was arbitrary, this holds for all $i \in\{1,2,\dots,d\}$, implying that 
\begin{align*}
\bbx_{T+1}&= \begin{bmatrix}(\bbx_{T+1})_1 \\
\vdots \\
(\bbx_{T+1})_d \\
\end{bmatrix} 
\\ &= \begin{bmatrix}\zeta_T (\bbx_0)_{1}+ \sqrt{1 -\alpha^{T}+ \gamma^2 \alpha^{T-2}(1-\alpha)}(\epsilon)_{1} +\sum_{t=2}^{T-1}   (1-\beta)^{\frac{T-t-1}{2}} \gamma((E_t )_{1}-(E_{t-1})_{1}) \\
\vdots \\
\zeta_T (\bbx_0)_{d}+ \sqrt{1 -\alpha^{T}+ \gamma^2 \alpha^{T-2}(1-\alpha)}(\epsilon)_{d} +\sum_{t=2}^{T-1}   (1-\beta)^{\frac{T-t-1}{2}} \gamma((E_t )_{d}-(E_{t-1})_{d}) \\
\end{bmatrix}
\\ & = \zeta_T \bbx_0+ \sqrt{1 -\alpha^{T}+ \gamma^2 \alpha^{T-2}(1-\alpha)}\epsilon +\sum_{t=2}^{T-1}   (1-\beta)^{\frac{T-t-1}{2}} \gamma(E_t -E_{t-1}), 
\end{align*}
where $\zeta_T=O(\sqrt{\gamma^T})$.

\end{proof}

\end{appendices}

\end{document}